\newcommand{\ed}{$\epsilon$-differential}
\newcommand{\mus}{\mu_{1|s}}
\newcommand{\hatmus}{\hat{\mu}_{1|s}}
\newcommand{\q}[1]{``#1''}
\newcommand{\tpn}{\tilde{p}_{0,s}}
\newcommand{\tpp}{\tilde{p}_{1,s}}
\title{Auditing and Achieving Intersectional Fairness in Classification Problems}
\author{Giulio Morina}
\affiliation{%
\institution{QuantumBlack, a McKinsey
  company}
\city{London}
\country{United Kingdom}}
\institution{University of Warwick}
\email{fairness@quantumblack.com}
\author{Viktoriia Oliinyk}
\affiliation{%
\institution{QuantumBlack, a McKinsey company}
\city{London}
\country{United Kingdom}}
\email{viktoriia.oliinyk@quantumblack.com}
\author{Julian Waton}
\affiliation{%
\institution{QuantumBlack, a McKinsey company}
\city{London}
\country{United Kingdom}}
\email{julian.waton@quantumblack.com}
\author{Ines Maru\v{s}i\'{c}}
\affiliation{%
\institution{QuantumBlack, a McKinsey company}
\city{London}
\country{United Kingdom}}
\email{ines.marusic@quantumblack.com}
\author{Konstantinos Georgatzis}
\affiliation{%
\institution{QuantumBlack, a McKinsey company}
\city{London}
\country{United Kingdom}}
\email{konstantinos.georgatzis@quantumblack.com}
\renewcommand\footnotetextcopyrightpermission[1]{} 
\begin{abstract}
Machine learning algorithms are extensively used to make increasingly more consequential decisions about people, so achieving optimal predictive performance can no longer be the only focus. A particularly important consideration is fairness with respect to race, gender, or any other sensitive attribute.
This paper studies intersectional fairness, where intersections of multiple sensitive attributes are considered. Prior research has mainly focused on fairness with respect to a single sensitive attribute, with intersectional fairness being comparatively less studied despite its critical importance for the safety of modern machine learning systems.
We present a comprehensive framework for auditing and achieving intersectional fairness in classification problems: we define a suite of metrics to assess intersectional fairness in the data or model outputs by extending known single-attribute fairness metrics, and propose methods for robustly estimating them even when some intersectional subgroups are underrepresented. Furthermore, we develop post-processing techniques to mitigate any detected intersectional bias in a classification model. Our techniques do not rely on any assumptions regarding the underlying model and preserve predictive performance at a guaranteed level of fairness. Finally, we give guidance on a practical implementation, showing how the proposed methods perform on a real-world dataset.

\end{abstract} 
\begin{document}
\maketitle
\pagestyle{plain}

\section{Introduction}

Fairness is a growing topic in the field of machine learning, as models
are being built to determine life-changing events such as loan
approvals and parole decisions. Thus, it is critical that these models do not discriminate against individuals on the basis of their race, gender or any other sensitive attribute, by learning to replicate or exacerbate biases inherent in society. Much of the algorithmic fairness literature thus far has focused on fairness with respect to an individual sensitive attribute. In this work, we consider fairness for an \emph{intersection of sensitive attributes}. That is, our focus is on ensuring fairness for groups defined by multiple sensitive attributes, for example,
``black women'' instead of just ``black people'' or
``women''.

Ensuring \emph{intersectional fairness} is critical for safe deployment of modern machine learning systems. A stark example of intersectional bias in deployed systems was discovered by \citet{Buol18} who showed that several 
commercially available gender classification systems from facial image
data had substantial intersectional accuracy disparities when considering gender and race (represented via Fitzpatrick skin type), with darker-skinned women being the most misclassified group -- having an accuracy drop of over 30\% compared to lighter-skinned men. \citet{Buol18} emphasize the need for investigating the \textit{intersectional} error rates, noting that gender and skin type alone do not paint the full picture regarding the distribution of misclassifications.

\citet{hart2017if} notes that medical data, e.g, from randomized control trials, are often biased in favor of white men and therefore any model trained on this data may exacerbate existing healthcare inequalities.  In their study on the increased risk of maternal death among ethnic minority women in the UK, \citet{ameh2008increased} note that there was limited data specifically for black and ethnic minority women born in the UK, and emphasized the need for reliable statistics to understand the scale of the problem.

\subsubsection*{Our Contributions}
We present a comprehensive framework for auditing and achieving intersectional fairness, consisting of three pillars: (i) metrics for measuring intersectional fairness in both datasets and model outputs, (ii) methods for robustly estimating these metrics, and (iii) post-processing methods for ensuring intersectional fairness in classification problems.

First, we define metrics for measuring intersectional fairness in datasets and model outputs by extending well-established fairness metrics to the case of intersectionalities. Our work builds most directly upon the concept of \ed\ fairness introduced by \citet{Foulds2018}. Specifically, we extend their definition of
differential fairness for statistical parity to: 1) elift and impact ratio metrics for data, and 2) equal opportunity and equalized odds metrics for model outputs. This enables practitioners to assess intersectional fairness through multiple, not mutually exclusive, lenses. 

Second, we propose techniques to robustly measure intersectional fairness. These techniques address real-world concerns of marginalized intersectional subgroups being even more underrepresented in the available datasets due to data-collection biases. Importantly, we provide theoretical guarantees and demonstrate the performance of the estimators qualitatively and experimentally on a synthetic dataset. 

Third, we develop algorithms to mitigate any detected intersectional bias in a binary classification model: post-processing methodologies that threshold risk scores and randomize predictions separately for each intersection of sensitive attributes, combining and extending the work of \citet{Hardt2016} and \citet{Corbett2017}. Our methods maximize predictive performance
whilst guaranteeing intersectional fairness. Furthermore, our formulation allows the practitioner to simultaneously focus on multiple fairness metrics, thus allowing to control for multiple facets of model bias. We provide implementation details and demonstrate the utility of our methods experimentally on the Adult Income Prediction problem \cite{Dua2019}.

\subsubsection*{Paper Structure}
We discuss related work in Section \ref{sec:related_work}.
We define intersectional fairness metrics in Section~\ref{sec:metrics}, proving some of their theoretical properties in Section~\ref{sec:differential_fairness} and presenting methods for robustly estimating them in
Section \ref{sec:robust_eps_est}. In Section~\ref{sec:post_processing}, we frame post-processing as an
optimization problem which aims to preserve good predictive performance while ensuring intersectional fairness; 
we introduce the formulations in Sections~\ref{sec:binary_classifier} and~\ref{sec:score_classifier} for binary and score predictors,
respectively. 
We demonstrate the utility of our methods experimentally on a synthetic dataset and on the Adult dataset
\cite{Dua2019} in Section~\ref{sec:examples}. In Section~\ref{sec:conclusion}, we conclude and
suggest future work. 
Proofs and notes on reproducibility of results are presented in the supplementary material. 

\subsubsection*{Running Example}

Throughout the paper, we consider a practical application of auditing and mitigating bias by using the 1994 U.S. census Adult dataset from the UCI repository \cite{Dua2019}. The aim is to predict whether an individual's income is
greater than \$50,000, using socio-demographic attributes. This dataset contains multiple sensitive attributes; in this paper, we focus on the following three: age, gender, and race.

\section{Related Work}
\label{sec:related_work}

There is no one fairness definition suitable for all use cases and application domains. Indeed, more than 20 different fairness metrics have been proposed \cite{Narayanan2018}, some of which are mutually incompatible \cite{Kleinberg2018, Pleiss2017}. What constitutes an appropriate fairness metric depends on the application, societal context, and any regulatory or other requirements.

One can broadly divide the existing fairness metrics into group and individual ones. \emph{Group fairness} partitions the population into groups according to the sensitive attributes and aims to ensure similar treatment with respect to a fixed statistical measure. \emph{Individual fairness} seeks for individuals with similar features to be treated similarly regardless of their sensitive attributes.

Assessing group fairness of a dataset or model output becomes much more challenging when considering multiple sensitive attributes \cite{Kotkin2008}. The number of generated subgroups grows exponentially with the number of attributes considered, making it difficult to inspect every subgroup for fairness due to both computational as well as data sparsity issues.
A first challenge is, therefore, to come up with fairness metrics that can accommodate a large number of intersectional subgroups \cite{Hebert2018, Kearns2018, Creager2019}. Our work builds most directly upon the \textit{\ed\ fairness} metric introduced by \citet{Foulds2018}. Such a metric satisfies important desiderata, overlooked by other multi-attribute metrics \cite{Kearns2018, Hebert2018}: It (i) considers multiple sensitive attributes, (ii) protects subgroups defined by intersections of and by individual sensitive attributes (e.g., \q{black women} and \q{women} respectively), (iii) safeguards minority groups, and (iv) aims at rectifying systematic differences between subgroups. \citet{Foulds2018} demonstrate that \textit{\ed\ fairness} also satisfies other important properties, such as providing privacy, economical, and generalization guarantees. They also extend the original definition to handle confounders and propose deep neural network classifiers that handle intersectional fairness.

\citet{Foulds2018} focus mainly on enabling a more subtle understanding of unfairness than with a single sensitive attribute, whereas we present multiple metrics that allow a more nuanced analysis of intersectional discrimination. While \citet{Foulds2018} propose a pointwise estimate for intersectional bias, we have found that it can be unstable in practice, as illustrated in Example \ref{sec_synthetic_experiment}. In a later work, \citet{Foulds2018bayesian} use an elegant hierarchical approach with probabilistic models to overcome the issue of instability and provide uncertainty estimates; their formulation, however, requires careful tuning of hyper-parameters and is computationally more demanding than our proposed ones.

Several other methods have been proposed for handling intersectional bias that either make use of ad-hoc algorithms \cite{Kim2019} or are based on visual analytic tools \cite{Cabrera2019}. 
For intersectional bias detection, \citet{Chung2019} suggest a top-down method to find underperforming subgroups. The dataset is divided into more granular groups by considering more features until a subgroup with statistically significant loss is found. In contrast, \citet{Lakkaraju2017} use approximate rule-based explanations to describe subgroup outcomes. 

As well as detecting discriminatory bias, another line of research has focused on achieving \q{fairer} models.
There are three possible points of intervention to mitigate unwanted bias in the machine learning pipeline: the training data, the learning algorithm, and the predicted outputs. These are associated with three classes of bias mitigation algorithms: pre-processing, in-processing, and post-processing. \textit{Pre-processing} methods a-priori transform the data to remove bias or extract representations that do not contain information related to sensitive attributes \cite{Dwork2012, Pedreshi2008, Kamiran2012}. \textit{In-processing} methods modify the model construction mechanism to take fairness into account \cite{Woodworth2017, Zhang2018, Kamishima2012}. \textit{Post-processing} methods transform the output of a black-box model in order to decrease discriminatory bias \cite{Hardt2016, Corbett2017}.

\citet{Kearns2018, Kearns2019} propose and demonstrate the performance of an in-processing training algorithm which mitigates intersectional bias by imposing fairness constraints on the protected subgroups. Their work is a generalization of the \q{oracle efficient} algorithm by \citet{Agarwal2018} to the case of infinitely many protected subgroups.


In contrast, we develop a novel post-processing method. Post-processing methods are popular in practical applications as they do not interfere with the training process and are thus suitable for run-time environments. In addition, these methods are model agnostic and privacy preserving as they do not require access to the model or features other than sensitive attributes \cite{Kamiran2012}. The work of \citet{Hardt2016} aims to ensure equal opportunity for two subgroups of the population, defined by a single binary sensitive attribute. They achieve this by randomly flipping some of the predictions in order to mitigate discriminatory bias. \citet{Corbett2017} propose another post-processing approach by treating model predictions differently depending on subgroup membership. We combine both approaches and expand them to the case of intersectional fairness. 


\label{sec:introduction}

\section{Metrics for intersectional fairness}
\label{sec:metrics}
In this section, we introduce fairness metrics that can handle intersections of multiple sensitive attributes. Such metrics can be applied to assess fairness in either the data or in model outputs. Robustly estimating them is non-trivial in practice due to subgroup underrepresentation. Indeed, minority groups may be even more severely underrepresented in a dataset compared to their true representation in the general population; one cause of this is bias in the data collection practices. After defining the metrics in Section \ref{sec:differential_fairness}, in Section \ref{sec:robust_eps_est} we  present three approaches for robustly estimating the intersectional impact ratio. The same approach can be applied to any other intersectional fairness metric.

\subsubsection*{Notation}
Let $p$ be the number of different sensitive attributes. We denote by $A_1,\dots,A_p$ disjoint sets of discrete-valued sensitive attributes; e.g., $A_1$ could represent gender, $A_2$ race, $A_3$ nationality and so forth. The space of intersections is denoted by $A = A_1 \times \dots \times A_p$. Therefore, a specific element $s \in A$ is a particular combination of attributes; e.g., $s = \left( \text{Woman, Black, Italian} \right) \in A_1 \times A_2 \times A_3$.

Suppose we have access to a finite dataset with $n$ observations denoted by $\mathcal{D} = \{(x_i,y_i)\}_{i = 1,\ldots,n}$, where $x_i$ represents the individual's features -- including their sensitive attributes -- and $y_i \in \{0,1\}$ a binary outcome. We interpret $y_i=1$ as a \q{positive} outcome and \q{negative} otherwise, denoting by $Y$ the random variable describing the true outcomes. Furthermore, we let $S$ be a discrete random variable with support on $A$. For brevity, we denote its probability mass function by $\mu_{s} = \mathbb{P}(S= s)$; i.e., $\mu_{s}$ is the probability that an individual has sensitive attributes $s \in A$. Analogously, we denote by $\mu_1 = \mathbb{P}(Y=1)$ the probability that a given individual has positive outcome. Finally, we will also denote  the probability that an individual with sensitive attributes $s$ has positive outcome as $\mus = \mathbb{P}(Y=1|S=s)$. We do not make explicit assumptions on the distribution of $Y$ or $S$ but we shall assume $\mu_{s} > 0, \mus> 0, \forall s \in A$. 

Given a classifier, we denote by $\hat{y}_i \in \{0,1\}$ the prediction for the $i$\textsuperscript{th} individual and by $\hat{Y}$ the corresponding random variable describing predicted outcomes. Importantly, we do not make any assumptions on how the model has been constructed and regard it as a black box. 

\subsection{Definitions of Metrics}
\label{sec:differential_fairness}
We now introduce intersectional fairness metrics for datasets and model outputs.
Our metrics are based on the \ed\ fairness framework of \citet{Foulds2018}. Metrics introduced in this paper can be seen as relaxations of the widely-used fairness metrics for a single sensitive attribute, motivated by the fact that the number of intersections grows exponentially with sensitive attributes. In Table \ref{tab:data_metrics} we define fairness metrics to assess intersectional bias in the data, while Table \ref{tab:model_metrics} defines metrics to assess intersectional bias in model outputs. With the exception of \ed\ fairness for statistical parity (introduced by \citet{Foulds2018}), intersectional fairness definitions for other metrics of Table 1 and 2 are, to our knowledge, novel contributions. We prove some of their theoretical properties in Theorem \ref{thm:marginal_fairness}. Although we restrict our analysis to fairness metrics for binary outcomes, they can be easily extended to the categorical case by simply requiring them to hold for all possible outcomes. 

All metrics are parameterized by $\epsilon \geq 0$. Note that $\epsilon = 0$ corresponds to achieving \textit{perfect fairness} with respect to a given metric. Moreover, \ed\ fairness allows us to compare bias between two different models. In particular, if we assume that two models achieve \ed\ fairness for $\epsilon_1$ and $\epsilon_2$ respectively, then the quantity $\exp(\epsilon_2 - \epsilon_1)$ can be interpreted as a multiplicative increase/decrease of one model's bias with respect to the other, a phenomenon known as bias amplification \cite{Zhao2017}.

Let us apply these metrics on our running Adult dataset example, focusing on two sensitive attributes: gender and race. If the income distribution in the population did not differ across race and gender subgroups, the elift ratio would be close to 1 and $\epsilon$ would be close to 0. We would like to collect a representative sample from each intersection that satisfies these requirements. In this U.S. census data, we see that the high income rate of white men is 30\% whilst for black women it is 6\%. The $\epsilon$ value for \emph{elift} is driven by the subgroup with the largest absolute difference in log proportion of high incomes from the base rate for the entire population; in this case the subgroup $(\text{gender, race}) = (\text{women, `other'})$. For the performance metric of intersectional False Positive Rate (FPR) parity, a fair model should have similar FPRs predicting high income for individuals who are white men and black women, say, as well as other combinations of the sensitive attributes.

\begin{table}[t]
\caption{$\epsilon$-differential fairness metrics on the data}
\label{tab:data_metrics}
\begin{minipage}{\columnwidth}
\begin{center}
\begin{tabular}{ll}
\toprule
Fairness metric &
Intersectional definition \\
\midrule
elift & 
$\displaystyle e^{-\epsilon} \leq \frac{\mathbb{P}(Y=1|S=s)}{\mathbb{P}(Y=1)}  \leq e^{\epsilon}, \forall s \in A$ \\
\begin{tabular}[c]{@{}l@{}}impact ratio\\(slift)\end{tabular} & 
$e^{-\epsilon} \leq \frac{\mathbb{P}(Y=1|S=s)}{\mathbb{P}(Y=1|S=s')}  \leq e^{\epsilon}, \forall s, s' \in A$ \\
\bottomrule
\end{tabular}
\end{center}
\end{minipage}
\end{table}

\begin{table}[t]
\caption{$\epsilon$-differential fairness metrics on the model}
\label{tab:model_metrics}
\begin{minipage}{\columnwidth}
\begin{center}
\begin{tabular}{ll}
\toprule
Fairness metric &
Intersectional definition \\
\midrule
\begin{tabular}[c]{@{}l@{}}statistical parity \\(demographic parity)\end{tabular} & 
$e^{-\epsilon} \leq \frac{\mathbb{P}(\hat{Y}=1|S=s)}{\mathbb{P}(\hat{Y}=1|S=s')}  \leq e^{\epsilon}, \forall s, s' \in A$ \\
\begin{tabular}[c]{@{}l@{}}TPR parity \\(equal opportunity)\end{tabular} & 
$e^{-\epsilon} \leq \frac{\mathbb{P}(\hat{Y}=1|Y=1, S=s)}{\mathbb{P}(\hat{Y}=1 |Y=1, S=s')}  \leq e^{\epsilon}, \forall s, s' \in A$ \\
FPR parity & 
$e^{-\epsilon} \leq \frac{\mathbb{P}(\hat{Y}=1|Y=0, S=s)}{\mathbb{P}(\hat{Y}=1 |Y=0, S=s')}  \leq e^{\epsilon}, \forall s, s' \in A$ \\
equalized odds & \begin{tabular}[c]{@{}l@{}}  If \ed\ fairness is satisfied \\ for both TPR and FPR parity \end{tabular} \\
\bottomrule
\end{tabular}
\end{center}
\end{minipage}
\end{table}

A key desideratum of any intersectional fairness metric is for intersectional fairness to imply fairness with respect to individual sensitive attributes or arbitrary subsets thereof. Theorem \ref{thm:marginal_fairness} proves that this is indeed the case; i.e., if $\epsilon$-differential fairness is satisfied for $A = A_1 \times \dots \times A_p$, then it is also satisfied when only $A_1$ is considered, $A_1 \times A_2$ and any other possible combination. 

\begin{theorem} 
\label{thm:marginal_fairness}
Let $A' = A_{c_1} \times \dots \times A_{c_k}$, where $c_i \in \{1,\dots,p\}$ and $k \leq p$. If \ed\ fairness is satisfied for any of the metrics in Tables \ref{tab:data_metrics} and \ref{tab:model_metrics} on the space of intersections $A$, then $\epsilon$-differential fairness is also satisfied on the space $A'$ for the same metric. 
\end{theorem}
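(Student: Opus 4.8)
The plan is to reduce every metric to a single elementary fact: a convex combination of real numbers lies between their minimum and maximum. The bridge from the full space $A$ to the subspace $A'$ is the law of total probability. Fix $t \in A'$ and let $A_t \subseteq A$ be the set of full intersections $s$ whose coordinates $c_1,\dots,c_k$ coincide with $t$; writing $S'$ for the restriction of $S$ to $A'$, the event $\{S' = t\}$ is exactly the disjoint union $\bigsqcup_{s \in A_t} \{S = s\}$. Consequently, for any of the conditional \q{rates} appearing in Tables~\ref{tab:data_metrics} and~\ref{tab:model_metrics} — write $r_s$ for the quantity conditioned on $S=s$ (e.g.\ $r_s = \mathbb{P}(Y=1\mid S=s)$ for elift, $r_s = \mathbb{P}(\hat{Y}=1 \mid Y=1, S=s)$ for TPR parity, and so on) — the corresponding rate on the subspace decomposes as
\[
r_t \;=\; \sum_{s \in A_t} w_s\, r_s, \qquad w_s \geq 0, \quad \sum_{s \in A_t} w_s = 1,
\]
where the weights $w_s$ are the appropriately conditioned probabilities of $\{S=s\}$ given $\{S'=t\}$ (and, for the model metrics, given $Y=1$ or $Y=0$ as well). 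The positivity assumptions $\mu_{s} > 0$ and $\mus > 0$ ensure every conditioning event has positive probability, so these weights are well defined. The first step is therefore to verify this identity for each metric; this is a direct application of the total-probability rule, and the only care needed is to confirm that the extra conditioning on $Y$ (for the TPR/FPR metrics) still leaves the weights summing to one over $s \in A_t$.

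The second step handles the two ratio-type families together. For the impact ratio, statistical parity, TPR parity, and FPR parity, I would first note that the two-sided condition $e^{-\epsilon} \leq r_s / r_{s'} \leq e^{\epsilon}$ holding for all $s, s' \in A$ is equivalent to the single inequality $\max_{s \in A} r_s \leq e^{\epsilon} \min_{s \in A} r_s$. Since each subspace rate $r_t$ is a convex combination of a subset of the $r_s$, it obeys $\min_{s \in A} r_s \leq r_t \leq \max_{s \in A} r_s$. Chaining these bounds gives
\[
\frac{\max_{t \in A'} r_t}{\min_{t' \in A'} r_{t'}} \;\leq\; \frac{\max_{s \in A} r_s}{\min_{s \in A} r_s} \;\leq\; e^{\epsilon},
\]
which is precisely \ed\ fairness on $A'$ for that metric.

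For the elift metric the argument is even shorter: the denominator $\mathbb{P}(Y=1) = \mu_1$ is a marginal quantity, unchanged when we restrict attention to $A'$, so the hypothesis says exactly that every $r_s$ lies in the interval $[e^{-\epsilon}\mu_1,\, e^{\epsilon}\mu_1]$; by the convex-combination property each $r_t$ lies in the same interval, yielding $e^{-\epsilon} \leq r_t / \mu_1 \leq e^{\epsilon}$. Finally, equalized odds is by definition the conjunction of TPR and FPR parity, so it is inherited on $A'$ the instant both of those are. I expect the main obstacle to be not any single calculation but presenting the total-probability decomposition uniformly across all the metrics so the convexity argument can be invoked once; in particular, one must be careful that for the conditional metrics the mixing weights are normalized over $A_t$ rather than over all of $A$, and that the same $\mu_1$ serves as the common denominator in the elift case.
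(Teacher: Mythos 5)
Your proof is correct and takes essentially the same route as the paper's: the paper reformulates each two-sided pairwise condition as a max/min condition and invokes Theorem VIII.1 of Foulds et al., whose proof rests on exactly the law-of-total-probability/convex-combination argument you spell out, including the same one-sided treatment of elift against the fixed denominator $\mu_1$. The only difference is that your argument is self-contained where the paper delegates the key max/min inequalities to the cited result.
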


\subsection{Robust Estimation of Metrics}
\label{sec:robust_eps_est}

We now tackle the problem of auditing discriminatory bias having only access to a finite dataset  $\mathcal{D}$. In particular, we are interested in the case where some combinations of sensitive attributes may be underrepresented in the data. This is often the case in real-world datasets, usually due to historical or societal biases. We first make clear what we mean by auditing for intersectional fairness. We then explore three different methodologies to achieve this: (i) \emph{smoothed empirical estimation}, where fairness metrics are directly computed from the data, (ii) \emph{bootstrap estimation}, to measure uncertainty in the empirical estimates, and (iii) \emph{Bayesian estimation}, to provide credible intervals. 

By \emph{estimating the level of intersectional bias} we mean computing the minimum value of $\epsilon \geq 0$ such that the chosen intersectional fairness conditions (one or more) of Tables \ref{tab:data_metrics} and \ref{tab:model_metrics} hold. For simplicity of exposition we focus on impact ratio, but the same reasoning can readily be applied to all other metrics. As per Table \ref{tab:data_metrics}, estimating the level of impact ratio bias means computing:
\begin{equation}
\label{eq:epsilon_IR}
\epsilon_{IR} := \min_{\epsilon \geq 0}\left\{ e^{-\epsilon} \leq \frac{\mus}{\mu_{1|s'}}  \leq e^{\epsilon}, \forall s, s' \in A  \right\}.
\end{equation}
In practical applications, it is often of interest to also check which attributes $s, s'$ yield big values of the ratios $\frac{\mus}{\mu_{1|s'}}$.

Computing $\epsilon_{IR}$ may appear straightforward: we could just calculate $\mus$ for all $s \in A$ and let $\epsilon_{IR} = \log\left( \max_{s, s' \in A} \left\{  \frac{\mus}{\mu_{1|s'}}  \right\} \right)$.  However, the values of $\mus$ are usually unknown and estimating them from the data for all the values of $s \in A$ can be challenging as few instances of a particular combination of attributes $s$ may be present in the dataset $\mathcal{D}$. Moreover, as previously mentioned, minority subgroups may be even more severely underrepresented in the dataset compared to their true representation in the general population, making the problem even harder. 

For example, the Adult dataset's training set contains 32,000 individuals, of which over 85\% are white people. This leaves only hundreds of people from the smallest minority groups, who might also have low rates of high income. Splitting the dataset by additional sensitive attributes will produce subgroups consisting of very few high earners, if any.
Our methods recognize that subgroups with fewer individuals produce noisier estimates and quantify this uncertainty.

\subsubsection{Smoothed Empirical Estimation}

A simple approach is to directly estimate $\mus$ from the data, as proposed by \citet{Foulds2018}. In particular, we set
\begin{equation}
\label{eq:smoothed_epsilon_IR}
\hatmus = \frac{N_{1,s} + \alpha}{N_{s} + \alpha + \beta},
\end{equation} 
where $N_{1,s}$ is the empirical count of occurrences of individuals with sensitive attributes $s$ and positive outcome in the dataset $\mathcal{D}$, while $N_{s}$ is the total number of individuals with attributes $s$. We introduce smoothing parameters $\alpha, \beta$ as $N_{s}$ or $N_{1,s}$ may be small due to data sparsity. Note that Equation \eqref{eq:smoothed_epsilon_IR} represents the expected posterior value of a Beta-Binomial model with prior parameters $\alpha, \beta$. The final estimate of $\epsilon$ is:
\begin{equation*}
\hat{\epsilon}_{IR} := \log\left( \max_{s, s' \in A} \left\{  \frac{\hatmus}{\hat{\mu}_{1|s'}}  \right\} \right) =  \log\left( \frac{\max_{s \in A} \hatmus }{ \min_{s' \in A}  \hat{\mu}_{1|s'} } \right).
\end{equation*}
This estimation procedure requires computing $\hatmus$ for all possible combinations of attributes $s \in A$, leading to $O(|A|)$ computational complexity. 
In general, it can be hard to tune the parameters $\alpha$ and $\beta$ properly as large values of either $\alpha$ or $\beta$ will introduce additional bias, while small values of $\beta$ will not solve the data sparsity problem. Therefore, this procedure is not robust; $\hat{\epsilon}_{IR}$ will generally be biased and no uncertainty quantification can be provided. Nevertheless we prove in Proposition \ref{prop:consistency_empirical_estimator} that, as the dataset size grows, the smoothed empirical estimator converges to the true value regardless of the chosen smoothing parameters. Although the result holds for $\alpha, \beta \in \mathbb{R}$, in practice one would choose them to be non-negative, and set them both to zero when no smoothing is desired.

\begin{proposition}
\label{prop:consistency_empirical_estimator}
The smoothed empirical estimate of $\epsilon$ for any \ed\ fairness metric is consistent for all $\alpha, \beta \in \mathbb{R}$.
\end{proposition}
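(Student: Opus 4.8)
The plan is to establish the stronger statement of almost-sure convergence $\hat{\epsilon}_{IR} \to \epsilon_{IR}$, which immediately gives consistency, and to do so by a two-stage continuous-mapping argument. The first stage proves that each building-block estimate $\hatmus$ is strongly consistent for $\mus$; the second stage observes that the target $\epsilon_{IR}$ is recovered from the finite collection $\{\mus : s \in A\}$ using only the continuous operations of maximum, minimum, ratio, and logarithm. The same template then applies to every other metric in Tables~\ref{tab:data_metrics} and~\ref{tab:model_metrics}.

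First I would prove pointwise consistency of $\hatmus$ for a fixed $s$. By the strong law of large numbers, $N_s/n \to \mu_s$ and $N_{1,s}/n \to \mathbb{P}(Y=1, S=s) = \mu_s\,\mus$ almost surely. Dividing the numerator and denominator of Equation~\eqref{eq:smoothed_epsilon_IR} by $n$,
\begin{equation*}
\hatmus = \frac{N_{1,s}/n + \alpha/n}{N_s/n + (\alpha+\beta)/n},
\end{equation*}
the smoothing contributions $\alpha/n$ and $(\alpha+\beta)/n$ vanish while the denominator tends to $\mu_s$, which is strictly positive by the standing assumption $\mu_s > 0$. Hence $\hatmus \to (\mu_s\,\mus)/\mu_s = \mus$ almost surely, and crucially this limit is independent of $\alpha$ and $\beta$ -- precisely what the clause \q{for all $\alpha,\beta \in \mathbb{R}$} demands.

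Next I would propagate these limits. As $A$ is finite, $\max_{s \in A} \hatmus$ and $\min_{s' \in A} \hat{\mu}_{1|s'}$ are continuous functions of finitely many almost-surely convergent quantities, so the continuous mapping theorem gives $\max_{s} \hatmus \to \max_{s} \mus$ and $\min_{s'} \hat{\mu}_{1|s'} \to \min_{s'} \mu_{1|s'}$. The assumption $\mus > 0$ for all $s$ ensures $\min_{s'} \mu_{1|s'} > 0$, so $(a,b) \mapsto \log(a/b)$ is continuous at the limiting point; a further application of the continuous mapping theorem yields
\begin{equation*}
\hat{\epsilon}_{IR} = \log\!\left( \frac{\max_{s} \hatmus}{\min_{s'} \hat{\mu}_{1|s'}} \right) \longrightarrow \log\!\left( \frac{\max_{s} \mus}{\min_{s'} \mu_{1|s'}} \right) = \epsilon_{IR}.
\end{equation*}

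The one point requiring care is that $\alpha,\beta$ range over all of $\mathbb{R}$, so for small $n$ the denominator $N_s + \alpha + \beta$ may be zero or negative and $\hatmus$ may be undefined or lie outside $[0,1]$. This is asymptotically harmless: since $\mu_s > 0$ we have $N_s \to \infty$ almost surely, so for all $n$ large enough the denominator is positive and the estimator is well defined, which is all that a limit statement concerns. Finally, extending to the remaining metrics introduces no new difficulty -- each is once more a logarithm of a maximal pairwise ratio of conditional probabilities (possibly conditioning additionally on $Y$), each such probability being estimated by an analogous smoothed frequency that is consistent by the identical law-of-large-numbers computation, so long as the relevant conditioning events have positive probability, as guaranteed by the standing positivity assumptions.
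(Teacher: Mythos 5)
Your proof is correct and follows essentially the same route as the paper's: the strong law of large numbers for the empirical counts $N_{1,s}/n$ and $N_s/n$, the observation that the smoothing terms $\alpha/n$ and $(\alpha+\beta)/n$ vanish in the limit, and then passage through max, min, ratio, and logarithm by continuity (the paper phrases this via Slutsky's theorem and convergence in probability, whereas you prove the slightly stronger almost-sure statement, but the decomposition is identical). Your extra remark that for arbitrary $\alpha, \beta \in \mathbb{R}$ the estimator may be undefined for small $n$ yet is eventually well defined since $N_s \to \infty$ is a careful detail that the paper's proof leaves implicit.
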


\subsubsection{Bootstrap Estimation}
We propose a bootstrap estimation procedure to provide confidence intervals for the estimate $\hat{\epsilon}_{IR}$. We generate $B$ different datasets by sampling with replacement $n$ observations from the original dataset $\mathcal{D}$. For each bootstrap sample, we  obtain an estimate $\hat{\epsilon}_{IR}^{(b)}, b=1,\ldots,B$ as in Equation \eqref{eq:smoothed_epsilon_IR}. The final estimate $\hat{\epsilon}_{IR}$ is obtained by averaging over the samples and empirical confidence intervals can be easily constructed. The computational complexity is $O(B|A|)$, but in practice we also observe a computational overhead due to the construction of the $B$ datasets. Notice that some of the generated datasets may not contain instances of specific attributes $s \in A$, producing undefined values if the smoothing parameters $\alpha, \beta$ are set to zero. 

\subsubsection{Bayesian Estimation}

Motivated by the form of Equation \eqref{eq:smoothed_epsilon_IR}, we propose a Bayesian approach by considering the likelihood  $N_{1,s}|\mus \sim Binom(N_{s}, \mus)$ and setting its conjugate prior $\mus \sim Beta(\alpha,\beta)$. The posterior is therefore tractable and given by
\begin{equation*}
\mus|N_{1,s} \sim Beta(\alpha+N_{1,s}, \beta+N_{s}-N_{1,s}).
\end{equation*}

We use Monte Carlo simulation techniques to get an estimate of $\epsilon_{IR}$. In particular, we simulate $m$ values of $\mus$ from the posterior and use them to compute the estimate of $\epsilon_{IR}$ as in Equation \eqref{eq:epsilon_IR}, with a computational complexity of $O(m|A|)$. Averaging the so-constructed sample gives the final estimate of $\epsilon_{IR}$. Moreover, this procedure promptly provides credible intervals. Finally, we note that the simulated values of $\mus$ will always be greater than zero, so that we do not need to resort to any further smoothing.  Prior parameters $\alpha, \beta$ can be chosen using domain knowledge or set close to zero to suggest no prior information. It follows from Proposition \ref{prop:consistency_bayesian_estimator} that this estimator is also consistent.

\begin{proposition}
\label{prop:consistency_bayesian_estimator}
The Bayesian estimate of $\epsilon$ for any \ed\ fairness metric is consistent $\forall \alpha, \beta > 0$. 
\end{proposition}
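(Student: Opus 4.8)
The plan is to reduce the claim, as in the empirical case, to the impact-ratio functional $g(\boldsymbol{\mu}) = \log\!\big(\max_{s\in A}\mus\big) - \log\!\big(\min_{s'\in A}\mu_{1|s'}\big)$, since every \ed\ metric is a continuous function of finitely many conditional probabilities, each of which is modelled by an analogous Beta--Binomial posterior. The Bayesian estimate is the Monte Carlo average of $g$ evaluated at $m$ independent posterior draws, so as $m \to \infty$ the strong law of large numbers gives convergence to the posterior mean $\mathbb{E}[g(\boldsymbol{\mu}) \mid \mathcal{D}]$. It therefore suffices to prove that this posterior mean converges in probability to the true value $g(\boldsymbol{\mu}^{\star}) = \epsilon_{IR}$ as the sample size $n \to \infty$.

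First I would establish posterior concentration coordinatewise. Because $\mu_{s} > 0$ for every $s \in A$, the strong law gives $N_{s}/n \to \mu_{s}$ almost surely, so $N_{s} \to \infty$; similarly $N_{1,s}/N_{s} \to \mus^{\star}$. The posterior $Beta(\alpha + N_{1,s}, \beta + N_{s} - N_{1,s})$ then has mean $\hatmus = (\alpha + N_{1,s})/(\alpha + \beta + N_{s}) \to \mus^{\star}$ and variance of order $1/N_{s} \to 0$, so each marginal posterior concentrates at the truth. As the coordinates are conditionally independent across $s$, the joint posterior concentrates at $\boldsymbol{\mu}^{\star}$, whose entries are all strictly positive.

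Next I would transfer this to the functional. Since $g$ is continuous on the region where all denominators are positive and $\boldsymbol{\mu}^{\star}$ lies in its interior, the continuous mapping theorem yields $g(\boldsymbol{\mu}) \to g(\boldsymbol{\mu}^{\star})$ in posterior probability. To upgrade this to convergence of the expectation I would invoke uniform integrability, and this is exactly where $\alpha, \beta > 0$ enters. With strictly positive priors both Beta parameters are bounded away from zero for every dataset, so $\mathbb{E}[-\log \mu_{1|s} \mid \mathcal{D}] = \psi(\alpha + \beta + N_{s}) - \psi(\alpha + N_{1,s})$ is finite and, using $\psi(x) = \log x + O(1/x)$, converges to $-\log \mus^{\star}$; an analogous bound controls $\mathbb{E}[\log \mus \mid \mathcal{D}]$. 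These uniform bounds on the one-sided logarithmic moments give uniform integrability of $g$, whence $\mathbb{E}[g(\boldsymbol{\mu}) \mid \mathcal{D}] \to g(\boldsymbol{\mu}^{\star})$, completing the argument.

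I expect the main obstacle to be controlling the left tail of the denominator $\min_{s'}\mu_{1|s'}$: a draw of some $\mu_{1|s'}$ close to $0$ makes $g$ arbitrarily large, so without an integrability guarantee the posterior mean of $g$ could fail to converge. Positivity of $\alpha$ (and, symmetrically, $\beta$) is precisely what prevents the posterior from placing non-integrable mass near the endpoints -- indeed for $\alpha = 0$ and a subgroup with $N_{1,s} = 0$ the posterior would be improper -- which is why the statement restricts to $\alpha, \beta > 0$ even though the smoothed empirical estimator of Proposition \ref{prop:consistency_empirical_estimator} tolerates arbitrary real smoothing. Once uniform integrability is in hand the remaining steps are routine.
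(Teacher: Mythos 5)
Your proof is correct and shares its skeleton with the paper's: condition on the data and send $m \to \infty$ (law of large numbers for the Monte Carlo average), then send $n \to \infty$ using concentration of the $Beta(\alpha+N_{1,s},\, \beta+N_{s}-N_{1,s})$ posterior around the smoothed empirical estimate of Proposition \ref{prop:consistency_empirical_estimator}, plus the continuous mapping theorem. Where you genuinely depart from the paper is the integrability analysis, and this is an improvement rather than a detour. The paper's proof stops at \q{the posterior converges to a Dirac delta at $\hatmus$, which is consistent, hence Monte Carlo averaging is consistent} (invoking the Central Limit Theorem where the law of large numbers is the operative tool); but posterior concentration alone does not yield convergence of the posterior expectation of an unbounded functional --- $\log(\max_{s} \mus) - \log(\min_{s'} \mu_{1|s'})$ blows up as any coordinate approaches $0$ --- so some tail control is indispensable, and the paper never supplies it, nor does its argument ever use the hypothesis $\alpha, \beta > 0$. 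Your digamma identity $\mathbb{E}[-\log \mu_{1|s} \mid \mathcal{D}] = \psi(\alpha+\beta+N_{s}) - \psi(\alpha+N_{1,s}) \to -\log \mus^{\star}$ (where $\mus^{\star}$ is the true value) supplies exactly this control and simultaneously explains the role of the hypothesis: positivity of $\alpha, \beta$ keeps the posterior proper when $N_{1,s} = 0$ and keeps the logarithmic moments finite. One step to tighten: convergence of first moments does not by itself imply uniform integrability; what closes the argument is the elementary fact that nonnegative random variables converging in probability to a finite constant, with means converging to that same constant, converge in $L^1$. Apply this to each coordinate $-\log \mu_{1|s}$ and conclude via the bound $\left|g(\boldsymbol{\mu}) - g(\boldsymbol{\mu}^{\star})\right| \le 2 \sum_{s \in A} \left|\log \mus - \log \mus^{\star}\right|$, where $g$ is the impact-ratio functional evaluated at a posterior draw and $g(\boldsymbol{\mu}^{\star})$ its value at the truth. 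With that repair, your argument is a complete and strictly more rigorous version of the paper's proof.
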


\section{Post-Processing of Classifier Model}
\label{sec:post_processing}
We defined in Section \ref{sec:metrics} different metrics for assessing intersectional fairness of model outputs. In this section, we present post-processing methods to mitigate any detected intersectional bias in a classification model. 

We argue that when possible, the best way to ensure fairness is to collect more representative data and retrain the model. Nevertheless, it is commonly the case that only historical data --- where conscious or unconscious bias is present --- is available. Training a new classifier may be impractical due to cost and time constraints. Moreover, in practice we often only have access to outputs of a trained classifier, but no knowledge on how such predictions were made -- either because the model is hard to interpret or because we do not have access to the model itself. This motivates the need to develop post-processing techniques that are model agnostic. Indeed, we make no assumptions on the model training mechanism and only require access to its outputs and sensitive attributes.  We will refer to it as a \q{binary predictor} if its outputs are 0 and 1 and as a \q{score predictor} if its outputs are in $[0,1]$. 

We propose a framework to allow the practitioner to make a trade-off between a model's accuracy and fairness. Let us return to our running example, but re-interpret it as data for loan applications. A model trained on the Adult dataset without post-processing is likely to have slightly higher overall performance, but one that is driven by the majority subgroup. As the dataset is imbalanced, a model may incorrectly deny loans more often to black women than white men, indicating intersectional bias. Depending on the desired notion of fairness, our proposed post-processing can ensure the model has balanced performance across all subgroups or gives out the same proportion of loans to every subgroup.

We construct a \textit{derived predictor} $\tilde{Y}$ with improved fairness with respect to one or more chosen metrics. In particular, by combining the approaches of \citet{Hardt2016} and \citet{Corbett2017}, we propose a class of derived predictors that are able to handle classifiers returning either binary predictions or scores. 
Section \ref{subsec:formulation_optimization_problem} presents a general framework for the construction of derived predictors. 
We explore how to compute them for a binary and score predictor in Sections \ref{sec:binary_classifier} and \ref{sec:score_classifier}, respectively. Crucially, the value of the derived predictor depends only on the given prediction $\hat{Y}$ and on the individual's combination of sensitive attributes $S$. 

\begin{definition}[\cite{Hardt2016}]
\label{def:derived_predictor}
A \textit{derived predictor} $\tilde{Y}$ is a random variable whose distribution depends solely on a classifier's predictions $\hat{Y}$ and an intersection of sensitive attributes $S$.
\end{definition}

Our aim is to construct a derived predictor that, by transforming predictions of a given classifier, achieves better fairness in terms of one or more \ed\ fairness metric(s). If the model only returns binary predictions $\hat{Y} \in \{0,1\}$, we can resort to \textit{randomization}, that is, randomly flipping some of the predictions. On the other hand, when the model returns scores, we can also threshold such scores to retrieve a binary prediction. We combine the two approaches in the following definition:

\begin{definition}[Randomized Thresholding Derived Predictor]
\label{def:randomized_thresholding_derived_predictor}
Given a classifier returning predictions $\hat{Y} \in [0,1]$, the Randomized Thresholding Derived Predictor (RTDP) $\tilde{Y}$ is a Bernoulli random variable such that
\begin{equation}
\label{eq:randomized_thresholding_derived_predictor}
\mathbb{P}(\tilde{Y} = 1 |\hat{Y} = \hat{y}, S = s) = \tpp\mathbb{I}(\hat{y} \geq \tau_{s}) +  \tilde{p}_{0,{s}}\mathbb{I}(\hat{y} < \tau_{s})
\end{equation}
where $\mathbb{I}$ is the indicator function and $\tau_{s}, \tpp, \tilde{p}_{0,s} \in [0,1]$, for all $s \in A$, are the tuning parameters.
\end{definition}
We interpret Equation \eqref{eq:randomized_thresholding_derived_predictor} as follows: given an individual with predicted score $\hat{y}$ and combination of sensitive attributes $s$, we first construct a binary prediction by thresholding on $\tau_{s}$ and then, with a specific probability, accommodate the possibility to reverse it or keep it. In particular, $\tpn$ is the probability of flipping what would have been a negative prediction, while $\tpp$ is the probability of keeping a positive prediction.

Note that Definition \ref{def:randomized_thresholding_derived_predictor} covers also the case where the model is a binary predictor; we explore this case in more detail in Section \ref{sec:binary_classifier}. In consequential applications, randomization may not be desired or permissible due to legal or other requirements. In this case, Definition \ref{def:randomized_thresholding_derived_predictor} allows us to construct a deterministic derived predictor by setting $\tpp = 1$ and $ \tilde{p}_{0,s} = 0$ for all $s \in A$.

\subsection{Formulation as an Optimization Problem}
\label{subsec:formulation_optimization_problem}

We construct the RTDP by solving an optimization problem. In order to assess performance of the post-processed model, we introduce a loss function $l(y,\tilde{y}): \{0,1\}^2 \to \mathbb{R}$ that, given the true and the post-processed outcomes, returns the cost of making such a prediction, following the approach of \citet{Hardt2016}. Without loss of generality, we assume $l(0,0) = l(1,1) = 0$, so that making correct predictions does not contribute to the loss. Indeed, if either a bonus or a penalty is desired for correct predictions, it can be incorporated by changing the values of $l(0,1)$ and $l(1,0)$. Therefore, minimizing the expected loss function preserves good predictive performance. 

\citet{Corbett2017} take a slightly different approach and aim to maximize a utility function, defined as $\mathbb{E}[Y\tilde{Y} - c\tilde{Y}], c \in (0,1)$.  An advantage of this approach is that it only requires tuning a constant $c$ that can be interpreted as the cost of making a positive prediction. We now prove that this approach is a special case of the framework we propose.

\begin{proposition}
\label{prop:immediate_utility}
Maximizing the immediate utility function
\[\mathbb{E}[Y\tilde{Y} - c\tilde{Y}] \text{ for a constant } c \in (0,1), \]
is equivalent to minimizing $\mathbb{E}[l(Y,\tilde{Y})]$ when setting $l(0,1) = c$ and $l(1,0) = 1-c$.
\end{proposition}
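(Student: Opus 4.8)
The plan is to expand both objective functions as weighted sums over the joint distribution of the pair $(Y, \tilde{Y})$, which takes only four values since both random variables are binary, and then to exhibit the two objectives as differing by an additive constant that does not depend on the derived predictor. Once that is established, the equivalence between maximizing the utility and minimizing the expected loss is immediate.

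First I would write the expected loss explicitly. Since $Y, \tilde{Y} \in \{0,1\}$ and $l(0,0) = l(1,1) = 0$, only the two misclassification terms survive, so
\[
\mathbb{E}[l(Y,\tilde{Y})] = l(0,1)\,\mathbb{P}(Y=0,\tilde{Y}=1) + l(1,0)\,\mathbb{P}(Y=1,\tilde{Y}=0),
\]
which under the assignment $l(0,1) = c$ and $l(1,0) = 1-c$ becomes $c\,\mathbb{P}(Y=0,\tilde{Y}=1) + (1-c)\,\mathbb{P}(Y=1,\tilde{Y}=0)$. In parallel, using that both variables are $\{0,1\}$-valued gives $\mathbb{E}[Y\tilde{Y}] = \mathbb{P}(Y=1,\tilde{Y}=1)$ and $\mathbb{E}[\tilde{Y}] = \mathbb{P}(Y=0,\tilde{Y}=1) + \mathbb{P}(Y=1,\tilde{Y}=1)$, so the utility rearranges to
\[
\mathbb{E}[Y\tilde{Y} - c\tilde{Y}] = (1-c)\,\mathbb{P}(Y=1,\tilde{Y}=1) - c\,\mathbb{P}(Y=0,\tilde{Y}=1).
\]

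The key step is to eliminate the true-positive term by invoking the fact that $\mathbb{P}(Y=1)$ is a property of the underlying data alone and is unaffected by the derived predictor, which by Definition~\ref{def:derived_predictor} only relabels predictions and never alters the true outcome $Y$. Writing $\mathbb{P}(Y=1,\tilde{Y}=1) = \mathbb{P}(Y=1) - \mathbb{P}(Y=1,\tilde{Y}=0)$ and substituting into the loss, I would obtain
\[
\mathbb{E}[l(Y,\tilde{Y})] = (1-c)\,\mathbb{P}(Y=1) - \mathbb{E}[Y\tilde{Y} - c\tilde{Y}].
\]
Because $(1-c)\,\mathbb{P}(Y=1)$ is constant over the feasible set of derived predictors, the expected loss equals the negated utility up to this additive constant; hence any derived predictor maximizing the utility also minimizes the expected loss, and conversely, which is the claimed equivalence.

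There is no genuine obstacle here, as the argument reduces to a direct algebraic rearrangement of expectations over a binary pair. The only point demanding care is correctly identifying which quantity is held fixed during the optimization, namely that $\mathbb{P}(Y=1)$ is determined by the joint law of $(Y, S)$ and does not vary with the tuning parameters $\tau_{s}$, $\tpp$, and $\tpn$; it is precisely this observation that makes the gap between the two objectives a true constant rather than a term that itself depends on the predictor being optimized.
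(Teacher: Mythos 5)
Your proof is correct, but it takes a somewhat different route from the paper's. The paper first rewrites the utility in terms of conditional rates, obtaining $\mathbb{E}[Y\tilde{Y} - c\tilde{Y}] = (1-c)\,\mu_1\,(1-\tilde{FNR}) - c\,(1-\mu_1)\,\tilde{FPR}$, negates it, and then invokes Proposition~\ref{prop:loss_function} to recognize the resulting minimization objective as $\mathbb{E}[l(Y,\tilde{Y})]$ with $l(0,1)=c$, $l(1,0)=1-c$; along the way the additive constant $(c-1)\mu_1$ is silently absorbed when passing from $(c-1)\,\mu_1\,(1-\tilde{FNR})$ to $(1-c)\,\mu_1\,\tilde{FNR}$ inside the $\min$, so that chain of displayed equalities is really an equivalence of minimizers rather than of minimum values. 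You instead work directly with the joint law of the binary pair $(Y,\tilde{Y})$, never pass through the rate decomposition or Proposition~\ref{prop:loss_function}, and arrive at the exact affine identity $\mathbb{E}[l(Y,\tilde{Y})] = (1-c)\,\mathbb{P}(Y=1) - \mathbb{E}[Y\tilde{Y} - c\tilde{Y}]$. This buys two things: the argument is self-contained, and the constant gap $(1-c)\,\mu_1$ is exhibited explicitly, which makes the claimed \q{equivalence} precise (identical optimizers, objective values differing by a predictor-independent constant) and justifies it via your correct observation that $\mathbb{P}(Y=1)$ is untouched by the tuning parameters $\tau_{s}$, $\tpp$, $\tpn$. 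What the paper's route buys in exchange is that the rate-based form of both objectives is exactly what gets reused downstream (e.g., in the linear-programming formulation of Proposition~\ref{prop:linear_programming}), so its proof doubles as the bridge to that machinery, whereas your identity, while cleaner as a standalone statement, would still need the rate decomposition to be redone there.
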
 

One can control the level of bias in the post-processed model by selecting the desired value of $\epsilon$ for the chosen (one or more) intersectional metrics of Table \ref{tab:model_metrics}. We consider two possible approaches to find the unknown parameters $\tau_{s}, \tpn, \tpp$: (i) minimizing the expected loss subject to the selected fairness metric(s) being satisfied for the chosen $\epsilon$, or (ii) adding a penalty term to the expected loss for values of the parameters that do not satisfy the required fairness constraint. The two approaches are in principle equivalent, but their practical implementations may differ as different numerical optimization routines need to be used. 

For instance, one established fairness guideline is the 80\% rule for statistical parity \cite{DEPARTMENTOFLABOR1978}; corresponding to requiring \ed\ fairness for statistical parity to hold for $\epsilon \leq -\log(0.8)$ (cf. Theorem \ref{thm:marginal_fairness}). 
We can either consider this as a constraint in the parameter space of the optimization problem or consider minimizing
\begin{equation*}
\mathbb{E}[l(Y,\tilde{Y})] + t\cdot \mathbb{I}\left\{ \exists s,s' \in A : \frac{\mathbb{P}(\tilde{Y}=1|S=s)}{\mathbb{P}(\tilde{Y}=1|S=s')}  > 0.8 \right\},
\end{equation*}
for $t$ appropriately large.
Note that any model-output fairness metric of Table \ref{tab:model_metrics} can be considered as a constraint; for instance, in Section \ref{sec:examples} we show how to achieve better equalized odds intersectional fairness.

We show in Proposition \ref{prop:loss_function} that the expected loss can be rewritten as a weighted sum of the False Positive Rate $\tilde{FPR} = \mathbb{P}(\tilde{Y} = 1 | Y=0)$ and the False Negative Rate $\tilde{FNR} = \mathbb{P}(\tilde{Y} = 0 | Y=1)$ of the post-processed model, where the weights depend on $\mu_1 = \mathbb{P}(Y=1)$.  

\begin{proposition} 
\label{prop:loss_function}
Minimizing $\mathbb{E}[l(Y,\tilde{Y})] $ is equivalent to minimizing
\begin{equation}
\label{eq:loss_function}
\tilde{FPR} \, (1-\mu_1) \, l(0,1) + \tilde{FNR} \, \mu_1 \, l(1,0).
\end{equation}
\end{proposition}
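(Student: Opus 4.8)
The plan is to show that $\mathbb{E}[l(Y,\tilde{Y})]$ is not merely minimized by the same parameters as expression \eqref{eq:loss_function} but is in fact \emph{identically equal} to it, so that the equivalence is immediate. First I would expand the expectation over the joint distribution of the true outcome $Y$ and the derived prediction $\tilde{Y}$, both of which take values in $\{0,1\}$:
\[
\mathbb{E}[l(Y,\tilde{Y})] = \sum_{y\in\{0,1\}}\sum_{\tilde{y}\in\{0,1\}} l(y,\tilde{y})\,\mathbb{P}(Y=y,\tilde{Y}=\tilde{y}).
\]
Next I would invoke the normalization assumption $l(0,0)=l(1,1)=0$ stated just before the proposition, which annihilates the two diagonal terms and leaves only the two misclassification contributions,
\[
\mathbb{E}[l(Y,\tilde{Y})] = l(0,1)\,\mathbb{P}(Y=0,\tilde{Y}=1) + l(1,0)\,\mathbb{P}(Y=1,\tilde{Y}=0).
\]

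The key step is to rewrite each joint probability as a product of a conditional and a marginal. Using $\mathbb{P}(Y=0)=1-\mu_1$ and $\mathbb{P}(Y=1)=\mu_1$ together with the definitions $\tilde{FPR}=\mathbb{P}(\tilde{Y}=1\mid Y=0)$ and $\tilde{FNR}=\mathbb{P}(\tilde{Y}=0\mid Y=1)$ given in the statement, I would factor
\[
\mathbb{P}(Y=0,\tilde{Y}=1) = (1-\mu_1)\,\tilde{FPR}, \qquad \mathbb{P}(Y=1,\tilde{Y}=0) = \mu_1\,\tilde{FNR}.
\]
Substituting these back recovers exactly \eqref{eq:loss_function}, establishing the identity $\mathbb{E}[l(Y,\tilde{Y})] = \tilde{FPR}\,(1-\mu_1)\,l(0,1) + \tilde{FNR}\,\mu_1\,l(1,0)$. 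Since minimizing a quantity over the tuning parameters $\tau_{s},\tpn,\tpp$ is the same as minimizing anything equal to it, the claimed equivalence then follows without any further argument.

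I do not anticipate a genuine obstacle here, as the result is a direct consequence of the definition of expectation and the zero-loss convention for correct predictions. The only point requiring mild care is bookkeeping: ensuring that the conditioning in $\tilde{FPR}$ and $\tilde{FNR}$ is matched correctly with the events $\{Y=0\}$ and $\{Y=1\}$ respectively, and that the marginal weights $1-\mu_1$ and $\mu_1$ are attached to the right terms. It is worth noting that the derived-predictor structure of Definition \ref{def:randomized_thresholding_derived_predictor} plays no role in this proposition: the identity holds for any $\tilde{Y}$, and it is precisely this generality that will later allow us to reason about the expected loss purely through $\tilde{FPR}$ and $\tilde{FNR}$.
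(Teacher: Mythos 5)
Your proposal is correct and follows essentially the same route as the paper's own proof: expand the expectation over the four outcomes, use the normalization $l(0,0)=l(1,1)=0$ to kill the diagonal terms, and factor each joint probability into a conditional times a marginal to obtain the exact identity $\mathbb{E}[l(Y,\tilde{Y})] = \tilde{FPR}\,(1-\mu_1)\,l(0,1) + \tilde{FNR}\,\mu_1\,l(1,0)$. Your closing remark that the identity holds for an arbitrary predictor $\tilde{Y}$, independently of the RTDP structure, is also accurate and consistent with how the paper later reuses this decomposition.
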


\begin{table*}[t]
\caption{Overview of the proposed optimization approaches for post-processing using RTDP (Definition \ref{def:randomized_thresholding_derived_predictor}).}
\label{table:postproc_summary}
\centering
\begin{tabular}{p{0.23\textwidth} p{0.33\textwidth} p{0.38\textwidth}}
\hline
\textit{Scenario} & \textit{Method} & \textit{Existance} \\
\hline

Only binary outcomes $\hat{y}_i$ (thresholding not possible) &
Optimize RTDP with randomization only, i.e., choosing $\tpp, \tilde{p}_{0,s}$ by LP using Proposition \ref{prop:linear_programming} &
Guaranteed to minimise Equation \eqref{eq:loss_function} for values of the fairness constraint \\

\hline

Randomization not appropriate (e.g., for regulatory reasons) &
Optimize RTDP deterministically by choosing thresholds $\tau_{s}$ &
Admissible region may be trivial solutions $\tau_{s} \in \{0, 1\}$ only if the fairness constraints are too strict \\

\hline

Randomization and thresholding (sequential approach) &
Optimize RTDP by first selecting thresholds without fairness constraints then choosing $\tpp, \tilde{p}_{0,s}$ by LP using Proposition \ref{prop:linear_programming} &
Guaranteed to find a solution for a given fairness constraint, but no guarantee to return global optimum \\

\hline

Randomization and thresholding (overall approach) &
Optimize RTDP jointly for thresholds $\tau_{s}$ and randomly flipping probabilities $\tpp, \tilde{p}_{0,s}$ &
Guaranteed to find a solution for a given fairness constraint, but no guarantee to return global optimum \\

\hline
\end{tabular}
\end{table*}

\subsection{Post-Processing of a Binary Predictor}
\label{sec:binary_classifier}

If the predictor returns solely binary predictions, we set $\tau_{s} = 1, \forall s \in A$ and tune the probabilities $\tpp$ and $\tilde{p}_{0,s}$ to construct the derived predictor. To find the unknown parameters we minimize the expected loss subject to the required fairness constraint; Proposition \ref{prop:linear_programming} shows that this optimization problem can be efficiently solved via linear programming.

\begin{proposition}
\label{prop:linear_programming}
Minimizing $\mathbb{E}[l(Y,\tilde{Y})]$ in Equation \eqref{eq:loss_function} in the variables $\tpp, \tpn$, subject to the constraints that $\tau_{s} = 1, \forall s \in A$ and that any of the \ed\ fairness model-output metrics (cf. Table \ref{tab:model_metrics}) is below a user-defined threshold, is a linear programming problem.
\end{proposition}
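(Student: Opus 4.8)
The plan is to show that, under the binary-predictor simplification $\tau_{s} = 1$, both the objective and each admissible fairness constraint are affine in the decision variables $\{\tpp, \tpn\}_{s \in A}$, so that the problem reduces to minimizing a linear functional over a polytope. First I would simplify the RTDP: with $\tau_{s} = 1$ and binary inputs $\hat{y} \in \{0,1\}$, Equation \eqref{eq:randomized_thresholding_derived_predictor} collapses to $\mathbb{P}(\tilde{Y} = 1 \mid \hat{Y} = 1, S = s) = \tpp$ and $\mathbb{P}(\tilde{Y} = 1 \mid \hat{Y} = 0, S = s) = \tpn$. The $2|A|$ probabilities $\tpp, \tpn$ are therefore the only free variables, each constrained to the box $[0,1]$.

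Next I would rewrite the objective. By Proposition \ref{prop:loss_function} it suffices to minimize the weighted combination of $\tilde{FPR}$ and $\tilde{FNR}$ in \eqref{eq:loss_function}. Conditioning on $S$ and $\hat{Y}$ and using that $\tilde{Y}$ depends only on $(\hat{Y}, S)$ (Definition \ref{def:derived_predictor}), I would expand
\[
\tilde{FPR} = \sum_{s \in A} \bigl[ \tpp\, \mathbb{P}(\hat{Y} = 1, S = s \mid Y = 0) + \tpn\, \mathbb{P}(\hat{Y} = 0, S = s \mid Y = 0) \bigr],
\]
and analogously for $\tilde{FNR}$. The coefficients are fixed quantities determined by the given classifier and the data distribution, not by the tuning parameters, so \eqref{eq:loss_function} is affine in $\{\tpp, \tpn\}$.

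Then I would handle the constraints. For each model-output metric of Table \ref{tab:model_metrics} the relevant per-group rate is affine in the variables: for TPR parity, $\mathbb{P}(\tilde{Y} = 1 \mid Y = 1, S = s) = \tpp\, \mathbb{P}(\hat{Y} = 1 \mid Y = 1, S = s) + \tpn\, \mathbb{P}(\hat{Y} = 0 \mid Y = 1, S = s)$, and similarly for FPR parity and statistical parity with the appropriate conditioning. The fairness condition bounds a ratio of two such affine expressions; clearing the denominator turns $e^{-\epsilon} \leq \text{ratio} \leq e^{\epsilon}$ into the pair of linear inequalities $\text{num} - e^{\epsilon}\,\text{den} \leq 0$ and $\text{num} - e^{-\epsilon}\,\text{den} \geq 0$ for every pair $s, s' \in A$. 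Equalized odds is simply the conjunction of the TPR- and FPR-parity families, hence the union of their linear inequalities. Together with the box constraints $\tpp, \tpn \in [0,1]$, the feasible set is a polytope, so minimizing the affine objective over it is a linear program.

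The main obstacle is the cross-multiplication step: the ratio constraint is not literally linear, and the denominator is itself a decision-dependent expression that can vanish. I would argue that, because each per-group rate is a convex combination of probabilities and hence nonnegative, the cleared linear form $\text{num} - e^{\epsilon}\,\text{den} \leq 0$ is the correct encoding even in the degenerate case, since it forces $\text{num} = 0$ whenever $\text{den} = 0$ and otherwise agrees with the ratio bound after multiplying by the positive denominator. Thus no case split on the sign of the denominator is needed and equivalence with the ratio formulation is preserved on the relevant domain. The remaining work — the conditioning expansions of $\tilde{FPR}$, $\tilde{FNR}$, and the per-group rates — is routine.
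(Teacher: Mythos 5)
Your proof is correct and follows essentially the same route as the paper's: express the post-processed per-group rates $\tilde{FPR}_{s}$, $\tilde{FNR}_{s}$ (and hence the objective, via Proposition \ref{prop:loss_function}) as affine functions of the variables $\tpp, \tpn$ with coefficients fixed by the given classifier, then turn each ratio-type fairness constraint into a pair of linear inequalities. The only difference is that you spell out the cross-multiplication step and the degenerate case of a vanishing denominator, both of which the paper's proof leaves implicit.
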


We conclude that in the case of a binary predictor, an RTDP can be computed in polynomial time \cite{karmarkar1984}. The unknown constant base rates $\mu_{s}$, $\mus$ and model metrics $FPR, FNR$ can be estimated from the data via any of the techniques introduced in Section \ref{sec:metrics}. 

\subsection{Post-Processing of a Score Predictor}
\label{sec:score_classifier}

We now focus on the more generic setting where the model outputs are in the form of scores $\hat{Y} \in [0,1]$, where high scores indicate high probability of a positive outcome. We assume no further knowledge on how these scores were computed, and treat the underlying model as a black box. To construct the RTDP we can optimize both the probabilities $\tpp$, $\tilde{p}_{0,s}$ and the thresholds $\tau_{s}$ for all $s \in A$, corresponding to a total of $3|A|$ parameters to optimize. Although we do not observe overfitting in our experiments (cf. Section \ref{sec:examples}), in other applications it may be necessary to use cross-validation or to add regularization terms to reduce the degrees of freedom (e.g., imposing $\tau_{s} = \tau_{s'}$ for some $s, s' \in A$). We explore in detail the \q{deterministic} scenario in Section \ref{sec:score_classifier_deterministic}. The case where both the thresholds and the probabilities are optimized is discussed in Section~\ref{sec:score_classifier_with_randomization}.

\begin{figure}[b]
\centering
\includegraphics[width=\columnwidth]{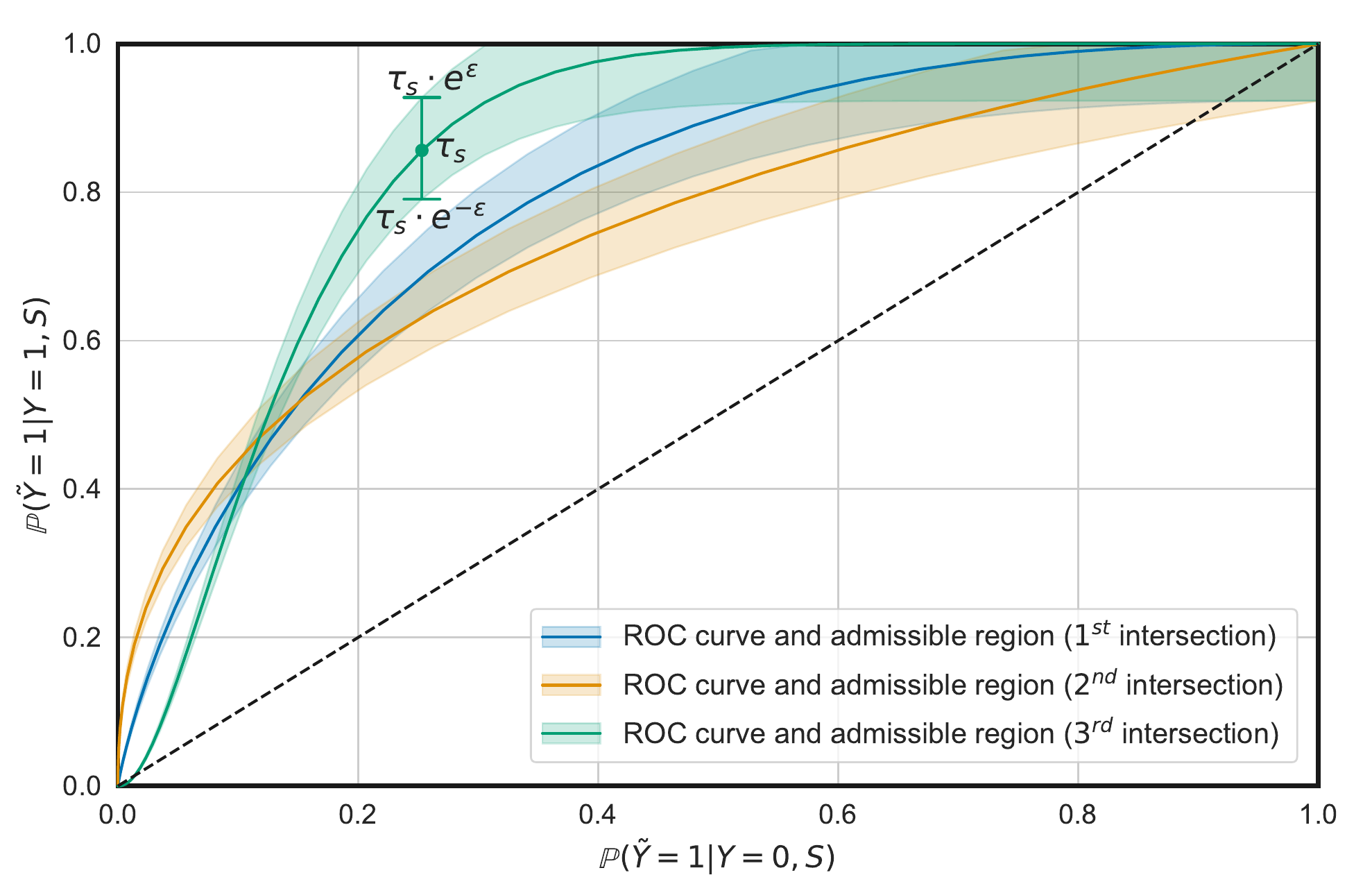}
\caption{Example of deterministic post-processing for equal opportunity for 3 intersections of sensitive attributes. The selected level of $\epsilon$ determines the admissible regions. }
\label{fig:optimization_thresholds}
\end{figure}

\subsubsection{Deterministic Post-Processing}
\label{sec:score_classifier_deterministic}
If no randomization is desired, we construct an RTDP fixing $\tpp = 1$ and $\tpn = 0, \forall s \in A$. This case is of particular interest as randomization may be undesirable in real-world applications, for instance when assessing judicial decisions \cite{Angwin2016}. We carefully tune the thresholds $\tau_{s}$, as they drive the predictive performance of the post-processed model. 

Figure \ref{fig:optimization_thresholds} illustrates the constrained optimization routine, where for explanatory purposes we only consider 3 intersections of sensitive attributes. The model performance differs across the 3 intersectional subgroups; this is apparent from the ROC curves for each subgroup. Note that a value of $\tau_{s}$ uniquely determines a point on each curve. The chosen level of \ed\ fairness determines a region around each ROC curve where the other ROC curves must also lie. Therefore, the optimal thresholds must be in an intersection of compact spaces in $[0,1]$. In practice, only a few points on each ROC curve are observed and the optimum can then be found by exhaustive search. Alternatively, ROC curves may be estimated from the data. Note that if the \ed\ fairness constraints are too strict, the only admissible solution may be to always return only positive or negative predictions. 

\subsubsection{Post-Processing Using Randomization}
\label{sec:score_classifier_with_randomization}
We now focus on constructing an RTDP by finding both the optimal thresholds $\tau_{s}$ and probabilities $\tpp, \tpn$. We first investigate whether applying randomization deteriorates model performance. Intuitively this should be the case if the given model performs reasonably well for every intersection of attributes. This is formalized in Proposition \ref{prop:optimization_no_constraints}, where we show that the randomization can improve predictive accuracy only if the model performance metrics are within certain bounds.

\begin{proposition}
\label{prop:optimization_no_constraints}
Given a score predictor $\hat{Y} \in [0,1]$, solving 
\begin{equation*}
min_{\tau_{s}, \tilde{p}_{0,s}, \tpp} \mathbb{E}[l(Y,\tilde{Y})],
\end{equation*}
where $\tilde{Y}$ is the RTDP of Definition \ref{def:derived_predictor}, is equivalent to setting $\tpp = 1, \tpn = 0, \forall s \in A$ and solving
\begin{equation*}
\min_{\tau_{s}} \mathbb{E}[l(Y,\tilde{Y})],
\end{equation*}
if and only if
\begin{equation}
\label{eq:assumption_optimization_no_constraints}
\frac{\tilde{TNR}_{s}}{\tilde{FNR}_{s}} > \frac{\mus}{1-\mus} \frac{l(1,0)}{l(0,1)}, \quad \frac{\tilde{TPR}_{s}}{\tilde{FPR}_{s}} > \frac{1-\mus}{\mus}\frac{l(0,1)}{l(1,0)}, \forall s \in A.
\end{equation}
\end{proposition}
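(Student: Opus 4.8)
The plan is to exploit the fact that, once the expected loss is rewritten via Proposition~\ref{prop:loss_function}, the objective decomposes additively across intersectional subgroups, so the joint optimization separates into $|A|$ independent problems, one per $s \in A$. Concretely, conditioning on $S=s$ and using $l(0,0)=l(1,1)=0$ together with the RTDP form (Definition~\ref{def:randomized_thresholding_derived_predictor}), I would write $\mathbb{E}[l(Y,\tilde{Y})] = \sum_{s\in A}\mu_s\big[\, l(0,1)(1-\mus)\tilde{FPR}_s + l(1,0)\mus\tilde{FNR}_s \,\big]$, where $\tilde{FPR}_s$ and $\tilde{FNR}_s$ depend only on the triple $(\tau_s,\tpn,\tpp)$. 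Since each summand involves a disjoint set of parameters and $\mu_s>0$, minimizing the total loss is the same as minimizing each bracketed term separately.

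Next, for a fixed subgroup $s$ and a fixed threshold $\tau_s$, I would introduce the threshold-induced base rates $q_{1,s} = \mathbb{P}(\hat{Y} \geq \tau_s \mid Y=1, S=s)$ and $q_{0,s} = \mathbb{P}(\hat{Y} \geq \tau_s \mid Y=0, S=s)$, so that the derived rates become affine in the flipping probabilities: $\tilde{FPR}_s = \tpp\, q_{0,s} + \tpn(1-q_{0,s})$ and $\tilde{TPR}_s = \tpp\, q_{1,s} + \tpn(1-q_{1,s})$, whence $\tilde{FNR}_s = 1-\tilde{TPR}_s$. Substituting, the per-subgroup objective is \emph{affine} in $(\tpn,\tpp)$, with coefficient of $\tpp$ equal to $l(0,1)(1-\mus)q_{0,s} - l(1,0)\mus q_{1,s}$ and coefficient of $\tpn$ equal to $l(0,1)(1-\mus)(1-q_{0,s}) - l(1,0)\mus(1-q_{1,s})$. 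An affine function on $[0,1]^2$ attains its minimum at a vertex, and the vertex $(\tpp,\tpn)=(1,0)$ --- the deterministic, no-randomization choice --- is a minimizer precisely when the coefficient of $\tpp$ is nonpositive and that of $\tpn$ is nonnegative.

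The following step is to verify that these two sign conditions are algebraically identical to~\eqref{eq:assumption_optimization_no_constraints}. Using that at the vertex $(1,0)$ the derived rates coincide with $q_{1,s}, q_{0,s}, 1-q_{1,s}, 1-q_{0,s}$, rearranging ``coefficient of $\tpp \leq 0$'' yields $\tilde{TPR}_s/\tilde{FPR}_s \geq \frac{1-\mus}{\mus}\frac{l(0,1)}{l(1,0)}$, while ``coefficient of $\tpn \geq 0$'' yields $\tilde{TNR}_s/\tilde{FNR}_s \geq \frac{\mus}{1-\mus}\frac{l(1,0)}{l(0,1)}$; the strict inequalities in the statement correspond to the vertex being the \emph{unique} minimizer. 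Thus, for each subgroup and each fixed threshold, non-randomization is optimal for the inner problem exactly when the stated inequalities hold at that threshold's rates.

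To assemble the global equivalence I would use that restricting to $(\tpp,\tpn)=(1,0)$ can only raise the optimum, so the two optimal values coincide iff there is an optimal (deterministic) threshold at which the inner affine program is minimized by the vertex $(1,0)$: the reverse direction exhibits such a threshold from the deterministic optimizer and applies the vertex characterization, and the forward direction uses that an exhibited deterministic minimizer attaining the joint value must itself be a joint minimizer. I expect the main obstacle to be precisely this coupling between the threshold $\tau_s$ and the flipping probabilities: the objective is affine in $(\tpn,\tpp)$ only after $\tau_s$ is fixed, while along $\tau_s$ it traces a generally non-convex ROC curve, so lifting the per-threshold vertex condition to the joint minimization --- and interpreting the threshold-dependent rates of~\eqref{eq:assumption_optimization_no_constraints} at the optimal operating point --- is the delicate part, with the geometry of the ROC curve governing whether randomization can ever strictly improve on the best deterministic threshold.
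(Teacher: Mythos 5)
Your proposal is correct and follows essentially the same route as the paper's proof: the per-subgroup decomposition $\mathbb{E}[l(Y,\tilde{Y})] = \sum_{s\in A}\mu_s\bigl[l(0,1)(1-\mus)\tilde{FPR}_s + l(1,0)\mus\tilde{FNR}_s\bigr]$, the threshold-induced rates (your $q_{0,s}, q_{1,s}$ are the paper's $FPR^{\star}_{s}, TPR^{\star}_{s}$), the affinity of each summand in $(\tpn,\tpp)$ for fixed $\tau_s$, and the identification of the sign conditions on the two coefficients with the inequalities in \eqref{eq:assumption_optimization_no_constraints}. If anything, you are more explicit than the paper about the strictness/uniqueness issue and about the delicacy of the \q{only if} direction and of interpreting the tilde-rates of \eqref{eq:assumption_optimization_no_constraints} at the thresholded operating point, points which the paper's own proof leaves implicit.
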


Even when randomization worsens predictive performance, it may still improve intersectional fairness. To find the optimal thresholds $\tau_{s}$ and probabilities $\tpp, \tpn$, we first consider a simple approach that we name \q{sequential post-processing}. Here we first find optimal thresholds $\tau_{s}$ when no fairness constraints are imposed. By applying such thresholds, we convert the scores $\hat{Y}$ to binary predictions, so that we can find optimal probabilities $\tpp, \tpn$ that achieve the desired fairness constraints via linear programming (cf. Proposition \ref{prop:linear_programming}). While this procedure may return an acceptable result for the case at hand, there is no guarantee it will return the global optimum. 

A different approach, which we will refer to as \q{overall post-processing}, is to solve the following optimization problem:
\begin{equation}
\label{eq:encapsulated_optimization}
\min_{\tau_{s}} f(\tau_{s}),\quad \text{s.t. } \tau_{s} \in [0,1], \forall s \in A,
\end{equation}
where $f(\tau_{s})$ is the optimal cost function value found by solving the optimization problem only in the variables $\tpp, \tpn$,  for a fixed $\tau_{s}$ (cf. Section \ref{sec:binary_classifier}). Although this may seem as adding an extra layer of complexity, we note that values of $f(\tau_{s})$ can be efficiently computed via linear programming. In general, since the model metrics are estimated from a finite dataset, $f(\tau_{s})$ is a piecewise constant function. Therefore, gradient-based optimization routines are unlikely to succeed as the gradient of the objective function -- if defined -- will be zero at all points. We discuss in the supplementary material the details of the optimizer we use and discuss other viable approaches in the conclusion.  

We summarize all approaches in Table \ref{table:postproc_summary}.

\section{Experiments}
\label{sec:examples}
We perform the following experiments to comprehensively evaluate our methods for auditing and achieving intersectional fairness: in Section \ref{sec_synthetic_experiment}, we apply the techniques of Section \ref{sec:robust_eps_est} to estimate the level of intersectional fairness of a synthetic dataset purposefully constructed so that one subgroup is underrepresented -- a common scenario in practice due to societal and data collection biases. In Section \ref{sec:adult_example}, we estimate the level of intersectional fairness of a trained classifier and then mitigate the detected intersectional bias using our post-processing techniques of Section \ref{sec:post_processing}. Here we consider intersectional fairness for 3 sensitive attributes. 

\subsection{Underrepresented Subgroup}  \label{sec_synthetic_experiment}
The synthetic dataset contains two sensitive attributes: one binary and one with 3 possible values. Out of the 6 intersectional subgroups, one (denoted $s_1$) is sparse: corresponding to $5\%$ of the dataset. Details of the dataset generation mechanism are in the supplementary material. For concreteness, we focus on intersectional fairness for \emph{impact ratio}, where the true value of $\epsilon_{IR}$ is known and equal to $\log\left( \frac{0.95}{0.05} \right) \approx 2.94$.

First, we show in Figure \ref{fig:estimation_comparison} how the estimates behave as the size of the dataset increases and analyze the confidence intervals (where applicable). Consistent with the theoretical guarantees of Propositions \ref{prop:consistency_empirical_estimator} and \ref{prop:consistency_bayesian_estimator}, all methods converge to the true value as the dataset size grows. Furthermore for smaller dataset sizes, the confidence intervals provided by the bootstrap method are generally wider than the ones obtained via a Bayesian approach. This is not surprising as the estimate of $\epsilon_{IR}$ is particularly unstable if any instances of subgroup $s_1$ are not replicated in one of the bootstrapped datasets; in this case, it is driven by our chosen values of smoothing parameters.

Second, we approximate the Mean Squared Error (MSE) of our three estimators. As shown in Figure \ref{fig:impact_ratio_MSE}, the Bayesian estimate performs better for all considered dataset sizes. For small dataset sizes, bootstrap estimate performs slightly worse than the empirical estimate, illustrating that one can get biased estimates of $\epsilon_{IR}$ if one intersectional subgroup (e.g., $s_1$ in our experiment) is poorly represented in a bootstrapped dataset.

Overall, we observe that the smoothed empirical estimator requires considerably less computational effort than the other two  methods, however unlike bootstrap or Bayesian estimates it does not provide any insight into how reliable the estimate is. Moreover, Bayesian estimation is in general faster than bootstrap, as the posterior parameters need to be computed only once and no computational overhead is observed. 

\begin{figure}[t]
\centering
\includegraphics[width=\columnwidth]{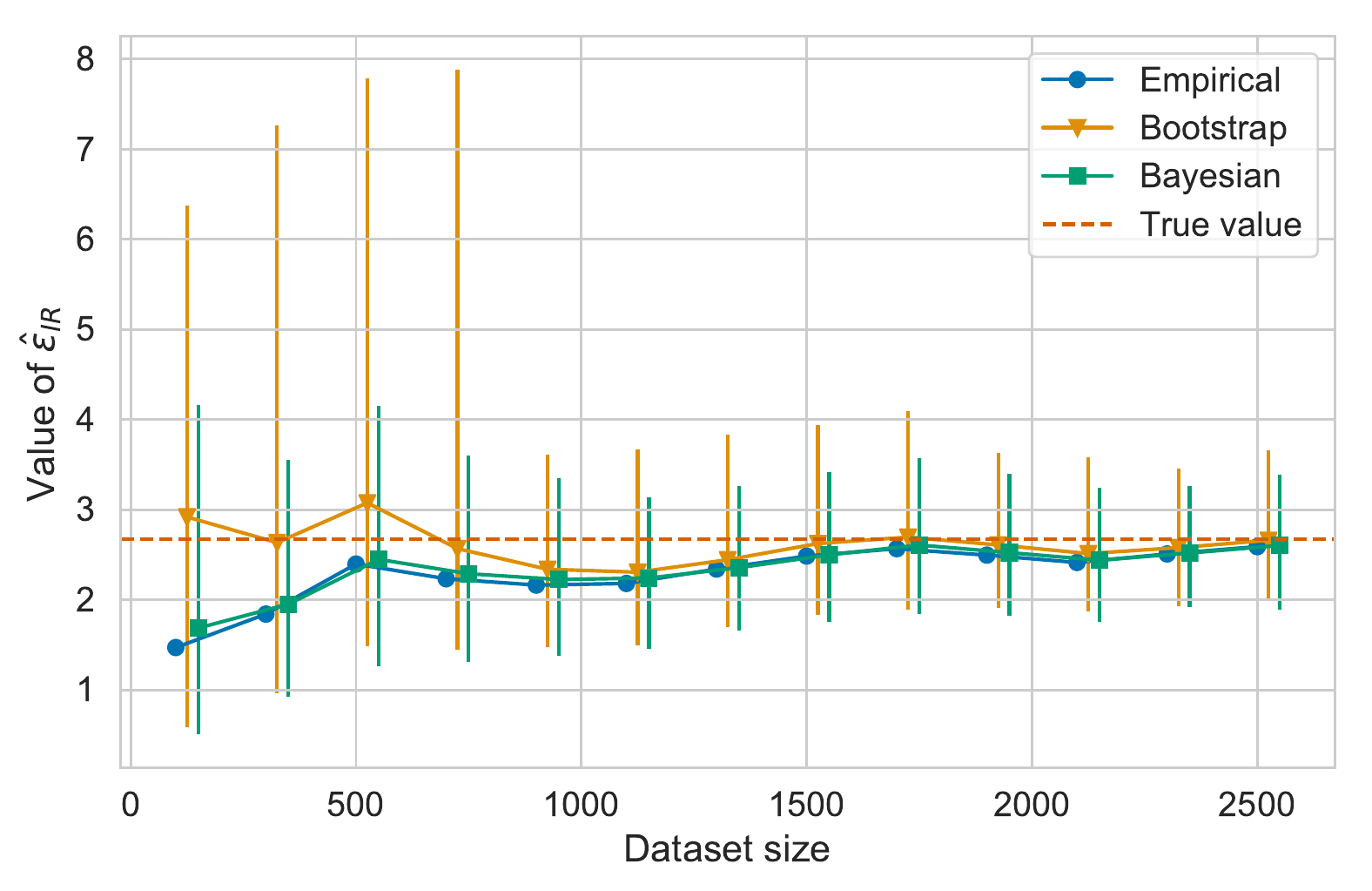}
\Description{The plot shows how the estimate of $\epsilon_{IR}$ becomes increasingly accurate as the size of the synthetic dataset increases. When either bootstrap or Monte Carlo are used as estimation techniques, 95\% confidence intervals can be plotted.}
\caption{Comparison of different estimators of intersectional impact ratio on synthetic datasets of increasing size. Vertical bars represent 95\% confidence intervals for bootstrap and Bayesian estimation where 1,000 bootstrapped datasets and Monte Carlo samples have been drawn, respectively.}
\label{fig:estimation_comparison}
\end{figure}

\begin{figure}[t]
\centering
\includegraphics[width=\columnwidth]{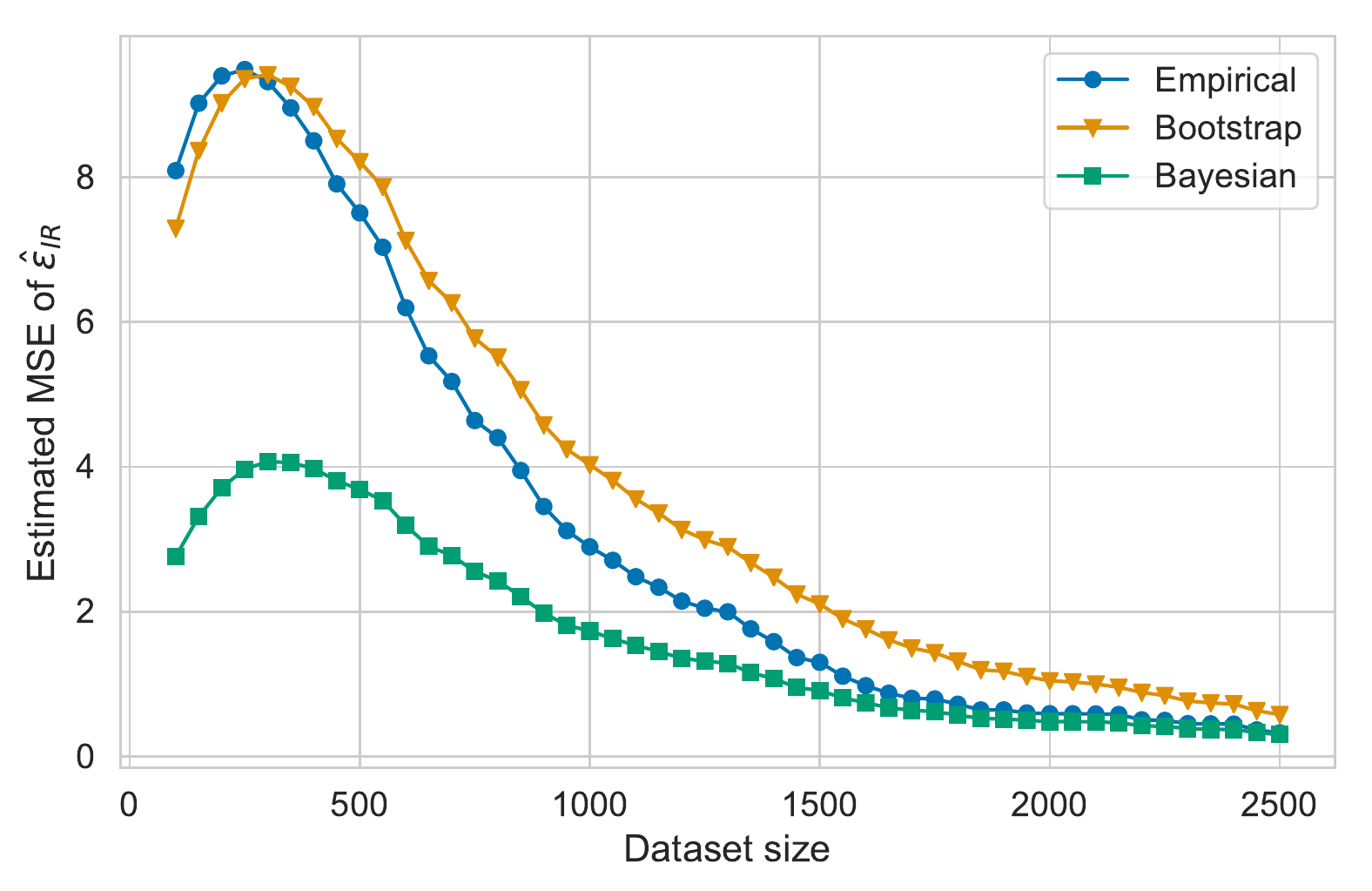}
\Description{The plot shows how the MSE  of the estimator $\hat{\epsilon_{IR}}$ decreases as the size of the synthetic dataset increases for the different methods. In particular, the Bayesian approach outperforms the other two for small dataset size. Bootstrap and Empirical estimation performs similarly, with bootstrap being better for larger dataset sizes. }
\caption{Comparison of the estimator's MSE on synthetic datasets of increasing size. MSE has been estimated by generating 1,000 different datasets with equal base rates.}
\label{fig:impact_ratio_MSE}
\end{figure}


\subsection{Adult Income Prediction} \label{sec:adult_example}


We return to our running example, focusing on three sensitive attributes: gender, age, and race. We treat age (binned) and gender as binary sensitive attributes, and race as having five values. We treat the model as a black box.  Details of the experiment configuration are in the supplementary material. First, we audit intersectional fairness on the dataset and the model outputs. We then compare performances of the different post-processing techniques.

\begin{figure}[t]
\centering
\includegraphics[width=\columnwidth]{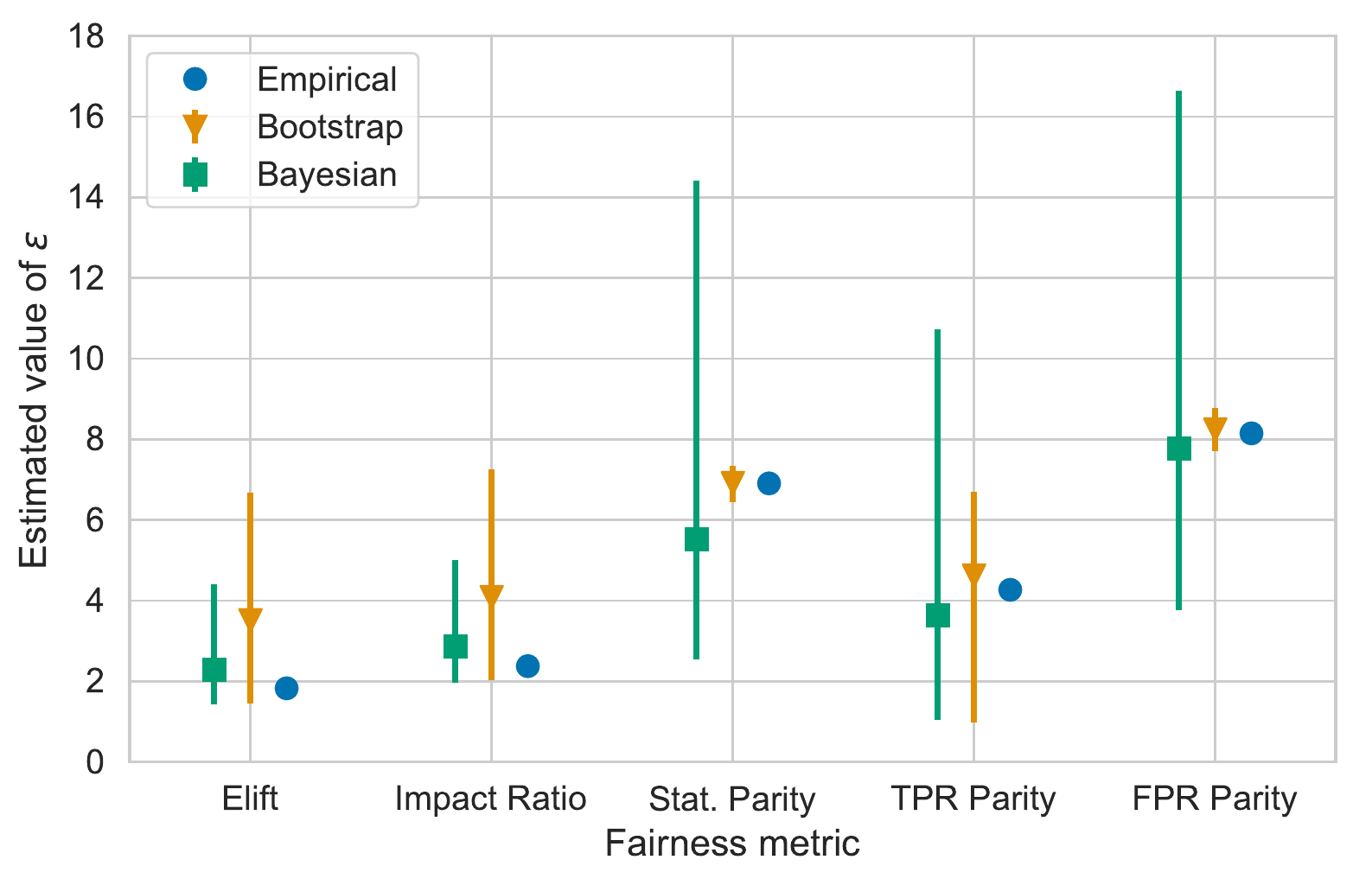}
\Description{Estimation of $\epsilon$ for \ed\ fairness for both data and model outputs metric on the Adult training set when gender, age, and race are considered as sensitive attributes. The three proposed methods produce similar results, with bootstrap and Bayesian also providing confidence intervals. The model exhibits high $\epsilon$ for FPR parity, around $8.14$.}
\caption{Estimates of \ed\ fairness for both data and model outputs metrics on the Adult training set when gender, age, and race are considered as sensitive attributes. Vertical bars represent 95\% confidence intervals.}
\label{fig:epsilon_adult_3attribute}
\end{figure}

\medskip \noindent \textit{Auditing intersectional fairness}

Figure \ref{fig:epsilon_adult_3attribute} shows the minimum values of $\epsilon$ such that \ed\ fairness is satisfied for different intersectional metrics on the data and the classifier outputs. The results indicate unfairness across all the different metrics, with \ed\ fairness for FPR parity being the worst ($\epsilon \approx 8.14$). Note that confidence intervals for the Bayesian procedure are generally wider: this is due to the model performing poorly for some subgroups, leading to a high variance in the estimates for $\epsilon$.

  \begin{table*}[t]
\caption{Predictive performance of given binary predictor and post-processed models on the Adult training set with gender, age, and race as sensitive attributes.}
\label{tab:performance_train_3attributes}
\begin{tabular}{l|ll|llll}
                    & \multicolumn{2}{c|}{\textit{No fairness constraints}}                                                      & \multicolumn{4}{c}{\textit{With fairness constraint $\epsilon \leq 8.14  - \log(400) \approx 2.15$}}                                                                                                                                                                                                                          \\ \hline
                    & \begin{tabular}[c]{@{}l@{}}Given\\ binary predictor\end{tabular} & \begin{tabular}[c]{@{}l@{}}Optimal\\ score model\end{tabular} & \begin{tabular}[c]{@{}l@{}}Randomization\\ only\end{tabular}  & Deterministic & \begin{tabular}[c]{@{}l@{}}Sequential\end{tabular} & Overall \\
TPR                 & 0.5450            & 0.5481 & 0.5434                                                        & 0.5995                                                               & 0.5465 & 0.5376                                                  \\
FPR                 & 0.0422             & 0.0427 & 0.0426                                                                & 0.0759                                                             & 0.0425 & 0.0400                                                       \\
Expected loss function & 0.1416             & 0.1412 & 0.1423                                                                & 0.1540                                                           & 0.1415 & 0.1417
\end{tabular}
\end{table*}

 \noindent \textit{Achieving intersectional fairness}

We now focus on mitigating the detected intersectional bias. We first consider the scenario where we only have access to binary predictions. As we assume no further knowledge of the underlying model, the only possible post-processing technique is randomization (cf. Section \ref{sec:binary_classifier}). We focus on improving the equalized odds intersectional fairness metric, and set the ambitious aim of reducing bias amplification by a multiplicative factor of $400$. This amounts to reaching an \ed\ fairness for FPR parity equal to $8.14 - \log(400) \approx 2.15$. As we do not want to deteriorate the TPR parity score, we impose as a constraint \ed\ fairness for equalized odds of less than $2.15$. We calculate optimal probabilities of changing the predictions here; we refer to this model as \q{randomization only}.

Next, we consider the scenario where prediction scores are available. The RTDP that achieves the best predictive performance is obtained when no fairness constraints are imposed (cf. Section \ref{sec:score_classifier_deterministic}). This model, henceforth referred to as the \q{optimal score model}, represents our baseline for assessing whether imposing fairness constraints deteriorates predictive performance significantly.

As before, we aim to achieve the level of \ed\ fairness for equalized odds of $\epsilon \leq 2.15$. Having access to the scores, we construct the following three post-processed models: 
\begin{itemize}
\item \q{Deterministic} post-processing, where we optimize the thresholds only;
\item \q{Sequential} post-processing, where we consider the optimal score model and apply randomization on top;
\item \q{Overall} post-processing, where we simultaneously optimize the thresholds and probabilities.
\end{itemize}

\begin{figure}[t]
\centering
\includegraphics[width=\columnwidth]{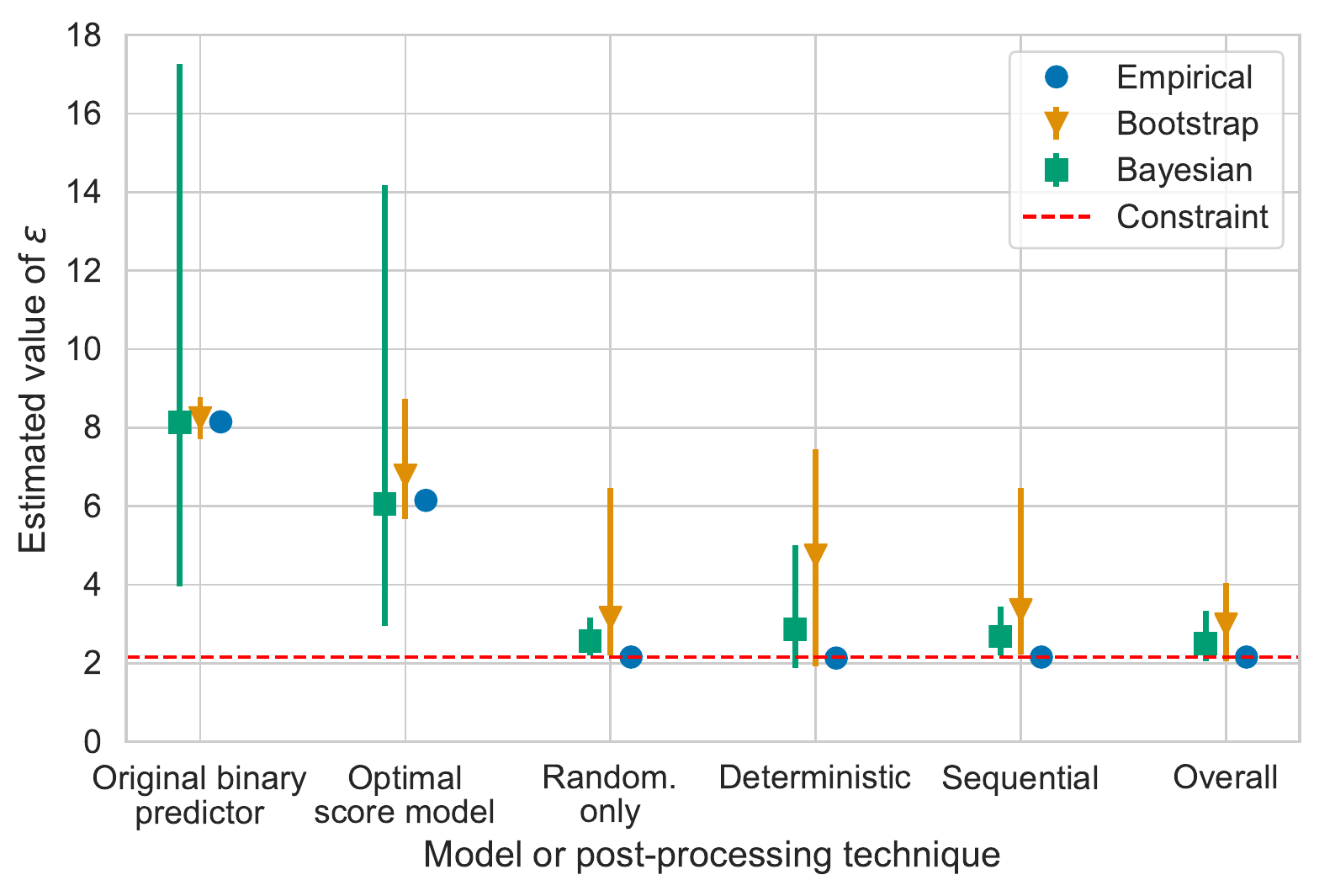}
\Description{Estimate of \ed\ fairness for equalized odds across the different models and using different estimation techniques, using the Adult training set when gender, age, and race are considered as sensitive attributes. All the post-processing techniques achieve the desired fairness constraint $ \epsilon \approx 2.15$.}
\caption{Estimate of \ed\ fairness for equalized odds for the original and the post-processed models, using the Adult training set when gender, age, and race are considered as sensitive attributes. The constraint is set to $ \epsilon \approx 2.15$.}
\label{fig:postprocess_fairness_3attribute}
\end{figure}

Figure \ref{fig:postprocess_fairness_3attribute} shows the level of \ed\ fairness for equalized odds achieved by the different post-processing techniques. We note that all the post-processed models achieve the desired fairness constraint according to the smoothed empirical estimator. The required value is also contained in the 95\% confidence intervals produced by the bootstrap and the Bayesian estimators.

Table \ref{tab:performance_train_3attributes} reports models' predictive performances. Note that there is almost no loss in performance when only randomization is used on top of the given binary predictor. Indeed, in the case of our experiment we found that the model performance was better after randomization for a small, underrepresented intersection; the model produced incorrect predictions more often than correct ones. This illustrates the utility of the post-processed model for assessing quality of the original model.

The optimal score model, while having the best predictive performance, does not reach the desired fairness constraint. On the other hand, the \q{deterministic} post-processed model reaches the fairness constraint but the expected loss is significantly greater than that of other models. We observe that \q{sequential} and \q{overall} post-processed models perform very similarly and close to the \q{optimal score model}.

\section{Conclusion and Future Work}
\label{sec:conclusion}
Intersectional fairness is crucial for safe deployment of modern machine learning systems, yet most of the algorithmic fairness literature has thus far focused on fairness with respect to a single sensitive attribute. We present a comprehensive framework for auditing and achieving intersectional fairness, i.e., fairness when intersections of multiple sensitive attributes are considered. First, we propose metrics to assess intersectional fairness in the data and the model outputs. Second, we propose 3 methods to robustly estimate these metrics: smoothed empirical, bootstrap, and Bayesian estimation. Using these methods, we can assess confidence in the estimates and rapidly evaluate which subgroups are misrepresented in the data or discriminated by the model. Third, we propose post-processing techniques that transform the output of a given binary classifier so as to achieve intersectional fairness with respect to the chosen metric. We implemented the proposed auditing and post-processing methods on the Adult dataset.

There are many remaining open problems in this area, including defining other intersectional fairness metrics, e.g., for calibration, and further refining estimation procedures thereof, e.g., by weighting the bootstrap samples, differently tuning the prior parameters of the Bayesian estimators, or taking a hierarchical approach as in \cite{Foulds2018bayesian}. Our post-processing techniques can be further improved by introducing a regularization term to avoid overfitting, smoothing the cost functions or by modifying the optimization procedure itself. Although we focused on post processing, research on pre- and in-processing techniques that achieve intersectional fairness can also be carried out. Another direction for future work is to develop post-processing techniques for regression and categorical classification problems.

\section*{Acknowledgements}
We thank Imran Ahmed, Anil Choudhary, Philip Pilgerstorfer and Stavros Tsalides for helpful comments and discussions. We would also like to thank anonymous referees for their valuable feedback, which helped us to improve the paper.

\bibliography{references}


\begin{thebibliography}{36}


\ifx \showCODEN    \undefined \def \showCODEN     #1{\unskip}     \fi
\ifx \showDOI      \undefined \def \showDOI       #1{#1}\fi
\ifx \showISBNx    \undefined \def \showISBNx     #1{\unskip}     \fi
\ifx \showISBNxiii \undefined \def \showISBNxiii  #1{\unskip}     \fi
\ifx \showISSN     \undefined \def \showISSN      #1{\unskip}     \fi
\ifx \showLCCN     \undefined \def \showLCCN      #1{\unskip}     \fi
\ifx \shownote     \undefined \def \shownote      #1{#1}          \fi
\ifx \showarticletitle \undefined \def \showarticletitle #1{#1}   \fi
\ifx \showURL      \undefined \def \showURL       {\relax}        \fi
\providecommand\bibfield[2]{#2}
\providecommand\bibinfo[2]{#2}
\providecommand\natexlab[1]{#1}
\providecommand\showeprint[2][]{arXiv:#2}

\bibitem[\protect\citeauthoryear{Agarwal, Beygelzimer, Dudik, Langford, and
  Wallach}{Agarwal et~al\mbox{.}}{2018}]%
        {Agarwal2018}
\bibfield{author}{\bibinfo{person}{Alekh Agarwal}, \bibinfo{person}{Alina
  Beygelzimer}, \bibinfo{person}{Miroslav Dudik}, \bibinfo{person}{John
  Langford}, {and} \bibinfo{person}{Hanna Wallach}.}
  \bibinfo{year}{2018}\natexlab{}.
\newblock \showarticletitle{A Reductions Approach to Fair Classification}. In
  \bibinfo{booktitle}{\emph{Proceedings of the 35th International Conference on
  Machine Learning}}.
\newblock


\bibitem[\protect\citeauthoryear{Ameh and Van Den~Broek}{Ameh and Van
  Den~Broek}{2008}]%
        {ameh2008increased}
\bibfield{author}{\bibinfo{person}{Charles~Anawo Ameh} {and}
  \bibinfo{person}{Nynke Van Den~Broek}.} \bibinfo{year}{2008}\natexlab{}.
\newblock \showarticletitle{Increased risk of maternal death among ethnic
  minority women in the {UK}}.
\newblock \bibinfo{journal}{\emph{The Obstetrician \& Gynaecologist}}
  \bibinfo{volume}{10}, \bibinfo{number}{3} (\bibinfo{year}{2008}),
  \bibinfo{pages}{177--182}.
\newblock


\bibitem[\protect\citeauthoryear{Angwin, Larson, Mattu, and Kirchner}{Angwin
  et~al\mbox{.}}{2016}]%
        {Angwin2016}
\bibfield{author}{\bibinfo{person}{Julia Angwin}, \bibinfo{person}{Jeff
  Larson}, \bibinfo{person}{Surya Mattu}, {and} \bibinfo{person}{Lauren
  Kirchner}.} \bibinfo{year}{2016}\natexlab{}.
\newblock \bibinfo{title}{Machine Bias: there's software used across the
  country to predict future criminals. And it's biased against blacks.}
\newblock
  \bibinfo{howpublished}{\url{https://www.propublica.org/article/machine-bias-risk-assessments-in-criminal-sentencing/}}.
\newblock
\newblock
\shownote{ProPublica.}


\bibitem[\protect\citeauthoryear{Buolamwini and Gebru}{Buolamwini and
  Gebru}{2018}]%
        {Buol18}
\bibfield{author}{\bibinfo{person}{Joy Buolamwini} {and}
  \bibinfo{person}{Timnit Gebru}.} \bibinfo{year}{2018}\natexlab{}.
\newblock \showarticletitle{Gender Shades: Intersectional Accuracy Disparities
  in Commercial Gender Classification}. In
  \bibinfo{booktitle}{\emph{Proceedings of the 1st Conference on Fairness,
  Accountability and Transparency}}.
\newblock


\bibitem[\protect\citeauthoryear{Byrd, Lu, Nocedal, and Zhu}{Byrd
  et~al\mbox{.}}{1995}]%
        {Byrd1995}
\bibfield{author}{\bibinfo{person}{Richard~H. Byrd}, \bibinfo{person}{Peihuang
  Lu}, \bibinfo{person}{Jorge Nocedal}, {and} \bibinfo{person}{Ciyou Zhu}.}
  \bibinfo{year}{1995}\natexlab{}.
\newblock \showarticletitle{A Limited Memory Algorithm for Bound Constrained
  Optimization}.
\newblock \bibinfo{journal}{\emph{SIAM Journal on Scientific Computing}}
  \bibinfo{volume}{16}, \bibinfo{number}{5} (\bibinfo{year}{1995}),
  \bibinfo{pages}{1190--1208}.
\newblock


\bibitem[\protect\citeauthoryear{Cabrera, Epperson, Hohman, Kahng, Morgenstern,
  and Chau}{Cabrera et~al\mbox{.}}{2019}]%
        {Cabrera2019}
\bibfield{author}{\bibinfo{person}{{\'A}ngel~Alexander Cabrera},
  \bibinfo{person}{Will Epperson}, \bibinfo{person}{Fred Hohman},
  \bibinfo{person}{Minsuk Kahng}, \bibinfo{person}{Jamie Morgenstern}, {and}
  \bibinfo{person}{Duen~Horng Chau}.} \bibinfo{year}{2019}\natexlab{}.
\newblock \showarticletitle{FairVis: Visual Analytics for Discovering
  Intersectional Bias in Machine Learning}.
\newblock \bibinfo{journal}{\emph{arXiv preprint arXiv:1904.05419}}
  (\bibinfo{year}{2019}).
\newblock


\bibitem[\protect\citeauthoryear{Chung, Kraska, Polyzotis, Tae, and
  Euijong~Whang}{Chung et~al\mbox{.}}{2019}]%
        {Chung2019}
\bibfield{author}{\bibinfo{person}{Yeounoh Chung}, \bibinfo{person}{Tim
  Kraska}, \bibinfo{person}{Neoklis Polyzotis}, \bibinfo{person}{Kihyun Tae},
  {and} \bibinfo{person}{Steven Euijong~Whang}.}
  \bibinfo{year}{2019}\natexlab{}.
\newblock \showarticletitle{Automated Data Slicing for Model Validation: A Big
  data - AI Integration Approach}.
\newblock \bibinfo{journal}{\emph{IEEE Transactions on Knowledge and Data
  Engineering}}  \bibinfo{volume}{PP} (\bibinfo{date}{05}
  \bibinfo{year}{2019}), \bibinfo{pages}{1--1}.
\newblock


\bibitem[\protect\citeauthoryear{Corbett-Davies, Pierson, Feller, Goel, and
  Huq}{Corbett-Davies et~al\mbox{.}}{2017}]%
        {Corbett2017}
\bibfield{author}{\bibinfo{person}{Sam Corbett-Davies}, \bibinfo{person}{Emma
  Pierson}, \bibinfo{person}{Avi Feller}, \bibinfo{person}{Sharad Goel}, {and}
  \bibinfo{person}{Aziz Huq}.} \bibinfo{year}{2017}\natexlab{}.
\newblock \showarticletitle{Algorithmic Decision Making and the Cost of
  Fairness}. In \bibinfo{booktitle}{\emph{Proceedings of the 23rd ACM SIGKDD
  International Conference on Knowledge Discovery and Data Mining}}.
\newblock


\bibitem[\protect\citeauthoryear{Creager, Madras, Jacobsen, Weis, Swersky,
  Pitassi, and Zemel}{Creager et~al\mbox{.}}{2019}]%
        {Creager2019}
\bibfield{author}{\bibinfo{person}{Elliot Creager}, \bibinfo{person}{David
  Madras}, \bibinfo{person}{Joern-Henrik Jacobsen}, \bibinfo{person}{Marissa
  Weis}, \bibinfo{person}{Kevin Swersky}, \bibinfo{person}{Toniann Pitassi},
  {and} \bibinfo{person}{Richard Zemel}.} \bibinfo{year}{2019}\natexlab{}.
\newblock \showarticletitle{Flexibly Fair Representation Learning by
  Disentanglement}. In \bibinfo{booktitle}{\emph{Proceedings of the 36th
  International Conference on Machine Learning}}.
\newblock


\bibitem[\protect\citeauthoryear{Dua and Graff}{Dua and Graff}{2017}]%
        {Dua2019}
\bibfield{author}{\bibinfo{person}{Dheeru Dua} {and} \bibinfo{person}{Casey
  Graff}.} \bibinfo{year}{2017}\natexlab{}.
\newblock \bibinfo{title}{{UCI} Machine Learning Repository}.
\newblock
\newblock
\urldef\tempurl%
\url{http://archive.ics.uci.edu/ml}
\showURL{%
\tempurl}


\bibitem[\protect\citeauthoryear{Dwork, Hardt, Pitassi, Reingold, and
  Zemel}{Dwork et~al\mbox{.}}{2012}]%
        {Dwork2012}
\bibfield{author}{\bibinfo{person}{Cynthia Dwork}, \bibinfo{person}{Moritz
  Hardt}, \bibinfo{person}{Toniann Pitassi}, \bibinfo{person}{Omer Reingold},
  {and} \bibinfo{person}{Richard Zemel}.} \bibinfo{year}{2012}\natexlab{}.
\newblock \showarticletitle{Fairness Through Awareness}. In
  \bibinfo{booktitle}{\emph{Proceedings of the 3rd Innovations in Theoretical
  Computer Science Conference}}.
\newblock


\bibitem[\protect\citeauthoryear{{Equal Employment Opportunity Commission, U.S.
  Civil Service Commission, U.S. Department of Labor and U.S. Department of
  Justice.}}{{Equal Employment Opportunity Commission, U.S. Civil Service
  Commission, U.S. Department of Labor and U.S. Department of
  Justice.}}{1978}]%
        {DEPARTMENTOFLABOR1978}
\bibfield{author}{\bibinfo{person}{{Equal Employment Opportunity Commission,
  U.S. Civil Service Commission, U.S. Department of Labor and U.S. Department
  of Justice.}}} \bibinfo{year}{1978}\natexlab{}.
\newblock \showarticletitle{{Uniform Guidelines on Employee Selection
  Procedure}}. In \bibinfo{booktitle}{\emph{43 FR 38295}}.
  \bibinfo{publisher}{Federal Register}.
\newblock


\bibitem[\protect\citeauthoryear{Forrest, Ralphs, Vigerske, LouHafer,
  Kristjansson, jpfasano, EdwinStraver, Lubin, Santos, rlougee, and
  Saltzman}{Forrest et~al\mbox{.}}{2018}]%
        {Forrest2018}
\bibfield{author}{\bibinfo{person}{John Forrest}, \bibinfo{person}{Ted Ralphs},
  \bibinfo{person}{Stefan Vigerske}, \bibinfo{person}{LouHafer},
  \bibinfo{person}{Bjarni Kristjansson}, \bibinfo{person}{jpfasano},
  \bibinfo{person}{EdwinStraver}, \bibinfo{person}{Miles Lubin},
  \bibinfo{person}{Haroldo~Gambini Santos}, \bibinfo{person}{rlougee}, {and}
  \bibinfo{person}{Matthew Saltzman}.} \bibinfo{year}{2018}\natexlab{}.
\newblock \bibinfo{title}{coin-or/Cbc: Version 2.9.9}.
\newblock
\newblock
\urldef\tempurl%
\url{https://doi.org/10.5281/zenodo.1317566}
\showDOI{\tempurl}


\bibitem[\protect\citeauthoryear{Foulds, Islam, Keya, and Pan}{Foulds
  et~al\mbox{.}}{2018a}]%
        {Foulds2018bayesian}
\bibfield{author}{\bibinfo{person}{James Foulds}, \bibinfo{person}{Rashidul
  Islam}, \bibinfo{person}{Kamrun Keya}, {and} \bibinfo{person}{Shimei Pan}.}
  \bibinfo{year}{2018}\natexlab{a}.
\newblock \showarticletitle{Bayesian Modeling of Intersectional Fairness: The
  Variance of Bias}.
\newblock \bibinfo{journal}{\emph{arXiv preprint arXiv:1811.07255}}
  (\bibinfo{year}{2018}).
\newblock


\bibitem[\protect\citeauthoryear{Foulds, Islam, Keya, and Pan}{Foulds
  et~al\mbox{.}}{2018b}]%
        {Foulds2018}
\bibfield{author}{\bibinfo{person}{James Foulds}, \bibinfo{person}{Rashidul
  Islam}, \bibinfo{person}{Kamrun~Naher Keya}, {and} \bibinfo{person}{Shimei
  Pan}.} \bibinfo{year}{2018}\natexlab{b}.
\newblock \showarticletitle{An Intersectional Definition of Fairness}.
\newblock \bibinfo{journal}{\emph{arXiv preprint arXiv:1807.08362}}
  (\bibinfo{year}{2018}).
\newblock


\bibitem[\protect\citeauthoryear{Hardt, Price, and Srebro}{Hardt
  et~al\mbox{.}}{2016}]%
        {Hardt2016}
\bibfield{author}{\bibinfo{person}{Moritz Hardt}, \bibinfo{person}{Eric Price},
  {and} \bibinfo{person}{Nati Srebro}.} \bibinfo{year}{2016}\natexlab{}.
\newblock \showarticletitle{Equality of Opportunity in Supervised Learning}. In
  \bibinfo{booktitle}{\emph{Advances in Neural Information Processing
  Systems}}.
\newblock


\bibitem[\protect\citeauthoryear{Hart}{Hart}{2017}]%
        {hart2017if}
\bibfield{author}{\bibinfo{person}{Robert~David Hart}.} \bibinfo{year}{July
  2017}\natexlab{}.
\newblock \showarticletitle{If you're not a white male, artificial
  intelligence's use in healthcare could be dangerous}.
\newblock \bibinfo{journal}{\emph{Quartz}} (\bibinfo{year}{July 2017}).
\newblock


\bibitem[\protect\citeauthoryear{Head, {MechCoder}, Louppe, {Iaroslav
  Shcherbatyi}, {Fcharras}, {Z\'{e} Vin\'{i}cius}, {Cmmalone}, Schr\"{o}der,
  {Nel215}, Campos, Young, Cereda, Fan, {Rene-Rex}, {Kejia (KJ) Shi},
  Schwabedal, {Carlosdanielcsantos}, {Hvass-Labs}, Pak, {SoManyUsernamesTaken},
  Callaway, Estève, Besson, Cherti, {Karlson Pfannschmidt}, Linzberger, Cauet,
  Gut, Mueller, and Fabisch}{Head et~al\mbox{.}}{2018}]%
        {skopt2018}
\bibfield{author}{\bibinfo{person}{Tim Head}, \bibinfo{person}{{MechCoder}},
  \bibinfo{person}{Gilles Louppe}, \bibinfo{person}{{Iaroslav Shcherbatyi}},
  \bibinfo{person}{{Fcharras}}, \bibinfo{person}{{Z\'{e} Vin\'{i}cius}},
  \bibinfo{person}{{Cmmalone}}, \bibinfo{person}{Christopher Schr\"{o}der},
  \bibinfo{person}{{Nel215}}, \bibinfo{person}{Nuno Campos},
  \bibinfo{person}{Todd Young}, \bibinfo{person}{Stefano Cereda},
  \bibinfo{person}{Thomas Fan}, \bibinfo{person}{{Rene-Rex}},
  \bibinfo{person}{{Kejia (KJ) Shi}}, \bibinfo{person}{Justus Schwabedal},
  \bibinfo{person}{{Carlosdanielcsantos}}, \bibinfo{person}{{Hvass-Labs}},
  \bibinfo{person}{Mikhail Pak}, \bibinfo{person}{{SoManyUsernamesTaken}},
  \bibinfo{person}{Fred Callaway}, \bibinfo{person}{Loïc Estève},
  \bibinfo{person}{Lilian Besson}, \bibinfo{person}{Mehdi Cherti},
  \bibinfo{person}{{Karlson Pfannschmidt}}, \bibinfo{person}{Fabian
  Linzberger}, \bibinfo{person}{Christophe Cauet}, \bibinfo{person}{Anna Gut},
  \bibinfo{person}{Andreas Mueller}, {and} \bibinfo{person}{Alexander
  Fabisch}.} \bibinfo{year}{2018}\natexlab{}.
\newblock \bibinfo{title}{Scikit-Optimize/Scikit-Optimize: V0.5.2}.
\newblock
\newblock
\urldef\tempurl%
\url{https://doi.org/10.5281/zenodo.1207017}
\showDOI{\tempurl}


\bibitem[\protect\citeauthoryear{Hebert-Johnson, Kim, Reingold, and
  Rothblum}{Hebert-Johnson et~al\mbox{.}}{2018}]%
        {Hebert2018}
\bibfield{author}{\bibinfo{person}{Ursula Hebert-Johnson},
  \bibinfo{person}{Michael Kim}, \bibinfo{person}{Omer Reingold}, {and}
  \bibinfo{person}{Guy Rothblum}.} \bibinfo{year}{2018}\natexlab{}.
\newblock \showarticletitle{Multicalibration: Calibration for the
  ({C}omputationally-Identifiable) Masses}. In
  \bibinfo{booktitle}{\emph{Proceedings of the 35th International Conference on
  Machine Learning}}.
\newblock


\bibitem[\protect\citeauthoryear{J.~Kotkin}{J.~Kotkin}{2008}]%
        {Kotkin2008}
\bibfield{author}{\bibinfo{person}{Minna J.~Kotkin}.}
  \bibinfo{year}{2008}\natexlab{}.
\newblock \showarticletitle{Diversity and Discrimination: A Look at Complex
  Bias}.
\newblock \bibinfo{journal}{\emph{William and Mary Law Rev.}}
  \bibinfo{volume}{50} (\bibinfo{date}{04} \bibinfo{year}{2008}).
\newblock


\bibitem[\protect\citeauthoryear{Kamiran and Calders}{Kamiran and
  Calders}{2012}]%
        {Kamiran2012}
\bibfield{author}{\bibinfo{person}{Faisal Kamiran} {and} \bibinfo{person}{Toon
  Calders}.} \bibinfo{year}{2012}\natexlab{}.
\newblock \showarticletitle{Data Preprocessing Techniques for Classification
  Without Discrimination}.
\newblock \bibinfo{journal}{\emph{Knowledge and Information Systems}}
  \bibinfo{volume}{33}, \bibinfo{number}{1} (\bibinfo{year}{2012}),
  \bibinfo{pages}{1--33}.
\newblock


\bibitem[\protect\citeauthoryear{Kamishima, Akaho, Asoh, and Sakuma}{Kamishima
  et~al\mbox{.}}{2012}]%
        {Kamishima2012}
\bibfield{author}{\bibinfo{person}{Toshihiro Kamishima},
  \bibinfo{person}{Shotaro Akaho}, \bibinfo{person}{Hideki Asoh}, {and}
  \bibinfo{person}{Jun Sakuma}.} \bibinfo{year}{2012}\natexlab{}.
\newblock \showarticletitle{Fairness-Aware Classifier with Prejudice Remover
  Regularizer}. In \bibinfo{booktitle}{\emph{Joint European Conference on
  Machine Learning and Knowledge Discovery in Databases}}. Springer,
  \bibinfo{pages}{35--50}.
\newblock


\bibitem[\protect\citeauthoryear{Karmarkar}{Karmarkar}{1984}]%
        {karmarkar1984}
\bibfield{author}{\bibinfo{person}{Narendra Karmarkar}.}
  \bibinfo{year}{1984}\natexlab{}.
\newblock \showarticletitle{A new polynomial-time algorithm for linear
  programming}. In \bibinfo{booktitle}{\emph{Proceedings of the sixteenth
  annual ACM symposium on Theory of computing}}. \bibinfo{pages}{302--311}.
\newblock


\bibitem[\protect\citeauthoryear{Kearns, Neel, Roth, and Wu}{Kearns
  et~al\mbox{.}}{2018}]%
        {Kearns2018}
\bibfield{author}{\bibinfo{person}{Michael Kearns}, \bibinfo{person}{Seth
  Neel}, \bibinfo{person}{Aaron Roth}, {and} \bibinfo{person}{Zhiwei~Steven
  Wu}.} \bibinfo{year}{2018}\natexlab{}.
\newblock \showarticletitle{Preventing Fairness Gerrymandering: Auditing and
  Learning for Subgroup Fairness}. In \bibinfo{booktitle}{\emph{Proceedings of
  the 35th International Conference on Machine Learning}}.
\newblock


\bibitem[\protect\citeauthoryear{Kearns, Neel, Roth, and Wu}{Kearns
  et~al\mbox{.}}{2019}]%
        {Kearns2019}
\bibfield{author}{\bibinfo{person}{Michael Kearns}, \bibinfo{person}{Seth
  Neel}, \bibinfo{person}{Aaron Roth}, {and} \bibinfo{person}{Zhiwei~Steven
  Wu}.} \bibinfo{year}{2019}\natexlab{}.
\newblock \showarticletitle{An Empirical Study of Rich Subgroup Fairness for
  Machine Learning}. In \bibinfo{booktitle}{\emph{Proceedings of the Conference
  on Fairness, Accountability, and Transparency}}.
\newblock


\bibitem[\protect\citeauthoryear{Kim, Ghorbani, and Zou}{Kim
  et~al\mbox{.}}{2019}]%
        {Kim2019}
\bibfield{author}{\bibinfo{person}{Michael~P. Kim}, \bibinfo{person}{Amirata
  Ghorbani}, {and} \bibinfo{person}{James Zou}.}
  \bibinfo{year}{2019}\natexlab{}.
\newblock \showarticletitle{Multiaccuracy: Black-Box Post-Processing for
  Fairness in Classification}. In \bibinfo{booktitle}{\emph{Proceedings of the
  2019 AAAI/ACM Conference on AI, Ethics, and Society}}.
\newblock


\bibitem[\protect\citeauthoryear{Kleinberg}{Kleinberg}{2018}]%
        {Kleinberg2018}
\bibfield{author}{\bibinfo{person}{Jon Kleinberg}.}
  \bibinfo{year}{2018}\natexlab{}.
\newblock \showarticletitle{Inherent Trade-Offs in Algorithmic Fairness}. In
  \bibinfo{booktitle}{\emph{Abstracts of the 2018 ACM International Conference
  on Measurement and Modeling of Computer Systems}}.
\newblock


\bibitem[\protect\citeauthoryear{Kraft}{Kraft}{1988}]%
        {Kraft1988}
\bibfield{author}{\bibinfo{person}{Dieter Kraft}.}
  \bibinfo{year}{1988}\natexlab{}.
\newblock \showarticletitle{A Software Package for Sequential Quadratic
  Programming}.
\newblock  (\bibinfo{year}{1988}).
\newblock


\bibitem[\protect\citeauthoryear{Lakkaraju, Kamar, Caruana, and
  Horvitz}{Lakkaraju et~al\mbox{.}}{2017}]%
        {Lakkaraju2017}
\bibfield{author}{\bibinfo{person}{Himabindu Lakkaraju}, \bibinfo{person}{Ece
  Kamar}, \bibinfo{person}{Rich Caruana}, {and} \bibinfo{person}{Eric
  Horvitz}.} \bibinfo{year}{2017}\natexlab{}.
\newblock \showarticletitle{Identifying Unknown Unknowns in the Open World:
  Representations and Policies for Guided Exploration}. In
  \bibinfo{booktitle}{\emph{Proceedings of the Thirty-First AAAI Conference on
  Artificial Intelligence}}.
\newblock


\bibitem[\protect\citeauthoryear{Manoukian}{Manoukian}{1986}]%
        {Manoukian1986}
\bibfield{author}{\bibinfo{person}{Edward~B. Manoukian}.}
  \bibinfo{year}{1986}\natexlab{}.
\newblock \bibinfo{booktitle}{\emph{Modern Concepts and Theorems of
  Mathematical Statistics}}.
\newblock


\bibitem[\protect\citeauthoryear{Narayanan}{Narayanan}{2018}]%
        {Narayanan2018}
\bibfield{author}{\bibinfo{person}{Arvind Narayanan}.}
  \bibinfo{year}{2018}\natexlab{}.
\newblock \showarticletitle{Translation tutorial: 21 fairness definitions and
  their politics}. In \bibinfo{booktitle}{\emph{Conference on Fairness,
  Accountability, and Transparency}}.
\newblock


\bibitem[\protect\citeauthoryear{Pedreshi, Ruggieri, and Turini}{Pedreshi
  et~al\mbox{.}}{2008}]%
        {Pedreshi2008}
\bibfield{author}{\bibinfo{person}{Dino Pedreshi}, \bibinfo{person}{Salvatore
  Ruggieri}, {and} \bibinfo{person}{Franco Turini}.}
  \bibinfo{year}{2008}\natexlab{}.
\newblock \showarticletitle{Discrimination-aware Data Mining}. In
  \bibinfo{booktitle}{\emph{Proceedings of the 14th ACM SIGKDD International
  Conference on Knowledge Discovery and Data Mining}}.
\newblock


\bibitem[\protect\citeauthoryear{Pleiss, Raghavan, Wu, Kleinberg, and
  Weinberger}{Pleiss et~al\mbox{.}}{2017}]%
        {Pleiss2017}
\bibfield{author}{\bibinfo{person}{Geoff Pleiss}, \bibinfo{person}{Manish
  Raghavan}, \bibinfo{person}{Felix Wu}, \bibinfo{person}{Jon Kleinberg}, {and}
  \bibinfo{person}{Kilian~Q. Weinberger}.} \bibinfo{year}{2017}\natexlab{}.
\newblock \showarticletitle{On Fairness and Calibration}.
\newblock In \bibinfo{booktitle}{\emph{Proceedings of the 31st International
  Conference on Neural Information Processing Systems}}.
\newblock


\bibitem[\protect\citeauthoryear{Woodworth, Gunasekar, Ohannessian, and
  Srebro}{Woodworth et~al\mbox{.}}{2017}]%
        {Woodworth2017}
\bibfield{author}{\bibinfo{person}{Blake Woodworth}, \bibinfo{person}{Suriya
  Gunasekar}, \bibinfo{person}{Mesrob~I. Ohannessian}, {and}
  \bibinfo{person}{Nathan Srebro}.} \bibinfo{year}{2017}\natexlab{}.
\newblock \showarticletitle{Learning Non-Discriminatory Predictors}. In
  \bibinfo{booktitle}{\emph{Proceedings of the 2017 Conference on Learning
  Theory}}.
\newblock


\bibitem[\protect\citeauthoryear{Zhang, Lemoine, and Mitchell}{Zhang
  et~al\mbox{.}}{2018}]%
        {Zhang2018}
\bibfield{author}{\bibinfo{person}{Brian~Hu Zhang}, \bibinfo{person}{Blake
  Lemoine}, {and} \bibinfo{person}{Margaret Mitchell}.}
  \bibinfo{year}{2018}\natexlab{}.
\newblock \showarticletitle{Mitigating Unwanted Biases with Adversarial
  Learning}. In \bibinfo{booktitle}{\emph{Proceedings of the 2018 AAAI/ACM
  Conference on AI, Ethics, and Society}}.
\newblock


\bibitem[\protect\citeauthoryear{{Zhao}, {Wang}, {Yatskar}, {Ordonez}, and
  {Chang}}{{Zhao} et~al\mbox{.}}{2017}]%
        {Zhao2017}
\bibfield{author}{\bibinfo{person}{Jieyu {Zhao}}, \bibinfo{person}{Tianlu
  {Wang}}, \bibinfo{person}{Mark {Yatskar}}, \bibinfo{person}{Vicente
  {Ordonez}}, {and} \bibinfo{person}{Kai-Wei {Chang}}.}
  \bibinfo{year}{2017}\natexlab{}.
\newblock \showarticletitle{Men Also Like Shopping: Reducing Gender Bias
  Amplification using Corpus-level Constraints}.
\newblock \bibinfo{journal}{\emph{arXiv preprint arXiv:1707.09457}}
  (\bibinfo{year}{2017}).
\newblock


\end{thebibliography}
\bibliographystyle{acm}

\clearpage
\appendix
We provide proofs  and experiment configuration in this appendix.

\section{Proofs of Section \ref{sec:metrics}}
\label{sec:append-proof-a}

\begin{proof}[Proof of Theorem \ref{thm:marginal_fairness}]
	Theorem VIII.1 of \citet{Foulds2018} proves the result in the case of \ed\ fairness for statistical parity. Their proof is based on the following reformulation of the original definition (Lemma VIII.1, \cite{Foulds2018}):
	\begin{equation*}
	\log\left( \max_{s \in A} \hatmus \right) - \log\left( \min_{s \in A} \hatmus\right) \leq \epsilon,
	\end{equation*}
	and on proving that
	\begin{align}
	\log\left( \max_{s \in A} \hatmus\right)  &\geq \log\left( \max_{s \in A'} \hatmus \right), \label{eq:max_ed_ineq} \\
	\log\left( \min_{s \in A} \hatmus\right) &\leq \log\left( \min_{s \in A'} \hatmus \right) \notag.
	\end{align}
	An analogous reformulation holds for the definitions of \ed\ fairness for impact ratio, TPR parity, and FPR parity. Therefore, the desired result holds for these metrics by reproducing the proof of Theorem VIII.1 of \citet{Foulds2018}.
	
	The definition of \ed\ fairness for the elift metric can be reformulated as:
	$\log\left( \max_{s \in A} \hatmus \right) - \log\left( \mu_1 \right) \leq \epsilon,$
	and so from Equation \eqref{eq:max_ed_ineq} it follows that
	\begin{equation*}
	\log\left( \max_{s \in A'} \hatmus \right) - \log\left( \mu_1 \right) \leq \log\left( \max_{s \in A} \hatmus \right) - \log\left( \mu_1 \right)  \leq \epsilon,
	\end{equation*}
	as desired.
	\end{proof}

\begin{proof}[Proof of Proposition \ref{prop:consistency_empirical_estimator}]
We prove the result for  impact ratio, but similar reasoning can be applied to prove consistency for all the \ed\ fairness metrics introduced in Tables \ref{tab:data_metrics} and \ref{tab:model_metrics}. Assume we have access to a dataset containing $n$ observations; we make the dependency on $n$ explicit by using superscript $n$. We will prove that $\hat{\epsilon}_{IR}^n$ converges in probability to $\epsilon_{IR}$, as defined in Equation \eqref{eq:epsilon_IR}.

Recall that $N_{1,s}$ denotes the number of occurrences in the dataset of individuals with attributes $s$ and positive outcome, while $N_{s}$ is the number of individuals with attribute $s$. Define the following estimators of $\mu_{1,s} := \mathbb{P}(Y=1, S=s)$ and $\mu_{s} := \mathbb{P}(S=s)$:
\begin{equation*}
\hat{\mu}_{1,s}^n = \frac{N_{1,s}}{n}, \quad \hat{\mu}_{s}^n = \frac{N_{s}}{n},
\end{equation*}
respectively. The two estimators are consistent by the Strong Law of Large Numbers. We can now apply Slutsky's theorem  \cite[p. 76]{Manoukian1986} and show:
\begin{equation*}
\hatmus^n = \frac{N_{1,s} + \alpha}{N_{s} + \alpha + \beta} = \frac{\hat{\mu}_{1,s}^n + \frac{\alpha}{n}}{\hat{\mu}_{s}^n + \frac{\alpha + \beta}{n}} \overset{p}{\to} \frac{{\mu}_{1,s}}{{\mu}_{s}} = \mu_{1|s},
\end{equation*}
assuming $\mu_{1|s} > 0, \forall s \in A$. By Slutsky's theorem, it follows:
\begin{equation*}
 \frac{\hatmus^n}{\hat{\mu}_{1|s'}^n} \overset{p}{\to} \frac{{\mu}_{1|s'}}{{\mu}_{1|s'}}.
\end{equation*}
Finally, by the Continuous Mapping Theorem, we conclude that $\hat{\epsilon}_{IR}^n$ is a consistent estimator of $\epsilon_{IR}$.
\end{proof}

\begin{proof}[Proof of Proposition \ref{prop:consistency_bayesian_estimator}]
The expected value of the posterior distribution is given by Equation \eqref{eq:smoothed_epsilon_IR}, and the variance is $o\left(\frac{1}{n}\right)$. Therefore, as $n \to \infty$ the posterior distribution converges to a Dirac delta concentrated on $\hatmus$. In the proof of Proposition \ref{prop:consistency_empirical_estimator} we showed that $\hatmus$ converges in probability to $\mu_{1|s}$. The Central Limit Theorem now implies that the Monte Carlo procedure yields consistent estimates.
\end{proof}

\section{Proofs of Section \ref{sec:post_processing}}
\label{sec:append-proof-b}

\begin{proof}[Proof of Proposition \ref{prop:immediate_utility}]
Consider
\begin{align*}
\mathbb{E}[Y\tilde{Y} - c\tilde{Y}] &= \mathbb{P}(Y=1, \tilde{Y}=1) - c \, \mathbb{P}(\tilde{Y} = 1) \\
	&= \mathbb{P}(\tilde{Y}=1|Y=1) \, \mathbb{P}(Y=1) \\
	 &\quad - c \left( \mathbb{P}(\tilde{Y} = 1 | Y=0) \, \mathbb{P}(Y=0) \right. \\
	 &\quad \left. + \mathbb{P}(\tilde{Y} = 1 | Y=1) \, \mathbb{P}(Y=1) \right)  \\
	&= \tilde{TPR} \, \mu_1 - c \, \tilde{FPR} \, (1-\mu_1) - c \,\tilde{TPR} \, \mu_1 \\
	&=(1-c) \, \mu_1 \, (1-\tilde{FNR}) -  c \, (1-\mu_1) \, \tilde{FPR}.
\end{align*}
Therefore by Proposition \ref{prop:loss_function}:
\begin{align*}
\max \mathbb{E}[Y\tilde{Y} - c\tilde{Y}] & = \min c \, (1-\mu_1) \, \tilde{FPR}+ (c-1) \, \mu_1 \, (1-\tilde{FNR}) \\
 &= \min  c \,(1-\mu_1) \, \tilde{FPR}+ (1-c) \, \mu_1 \, \tilde{FNR} \\
 &= \min \mathbb{E}[l(Y,\tilde{Y})]
\end{align*}
where $l(0,1) = c$ and $l(1,0) = 1-c$.
\end{proof}

\begin{proof}[Proof of Proposition \ref{prop:loss_function}]
Recall that we assumed w.l.o.g. that $l(0,0) = l(1,1) = 0$. This implies that
\begin{align*}
\mathbb{E}[l(Y,\tilde{Y})] &= \mathbb{P}(Y=0, \tilde{Y}=1) \, l(0,1) +  \mathbb{P}(Y=1, \tilde{Y}=0) \, l(1,0) \\
 &=   \mathbb{P}(\tilde{Y}=1|Y=0) \, \mathbb{P}(Y=0)\, l(0,1)  \\
  & \quad + \mathbb{P}( \tilde{Y}=0 | Y=1) \, \mathbb{P}(Y=1) \, l(1,0)  \\
  &= \tilde{FPR} \, (1-\mu_1) \,  l(0,1) + \tilde{FNR} \, \mu_1  l(1,0).
\end{align*}
It follows that
\begin{align*}
\min \mathbb{E}[l(Y,\tilde{Y})] = \min \{\tilde{FPR} \, (1-\mu_1) \, l(0,1) + \tilde{FNR} \,\mu_1 \, l(1,0)\},
\end{align*}
as desired.
\end{proof}

\begin{proof}[Proof of Proposition \ref{prop:linear_programming}]
Denote the FPR for individuals with attribute $s$ of the given model as $\hat{FPR}_{s} := \mathbb{P}(\hat{Y}=1|Y=0, S=s)$ and the FNR as $\hat{FNR}_{s} := \mathbb{P}(\hat{Y}=0|Y=1, S=s)$. It follows that
\begin{align*}
\tilde{FPR}_{s} &= \tpn(1-\hat{FPR}_{s}) + \tpp\hat{FPR}_{s}, \\
\tilde{FNR}_{s} &=(1-\tpn) \, \hat{FNR}_{s} + (1-\tpp) \, (1-\hat{FNR}_{s}).
\end{align*}
Therefore $\tilde{FPR}(1-\mu_1)l(0,1) + \tilde{FNR}\mu_1l(1,0)$ is a linear combination of the variables $ \tilde{p}_{0,s}$ and $\tpp$. By Proposition \ref{prop:loss_function}, minimizing Equation \eqref{eq:loss_function} is equivalent to minimizing $\mathbb{E}[l(Y, \tilde{Y})]$. Therefore, the objective function is indeed linear. All that remains now is to show that the optimization constraints are also linear. 

Consider for instance using statistical parity as the fairness constraint, that is $e^{-\epsilon} \leq \frac{\mathbb{P}(\tilde{Y} = 1 | S=s)}{\mathbb{P}(\tilde{Y} = 1 | S=s')} \leq e^{\epsilon}$ for all $s, s' \in A$. By the law of total probability, it follows that:
\begin{equation*}
\mathbb{P}(\tilde{Y} = 1 | S=s) = \tilde{FPR}_{s}(1-\mu_{1|s}) + (1-\tilde{FNR}_{s}) \mu_{1|s},
\end{equation*}
and we have already shown that $\tilde{FPR}_{s}$ and $\tilde{FNR}_{s}$ are linear in the variables to be optimized. The same conclusion holds when equal opportunity or FPR parity are considered as constraints, and therefore also for equalized odds. Indeed, we can require (as our fairness constraint) multiple $\epsilon$-differential fairness definitions to hold simultaneously, each one for a possibly different value of $\epsilon$.
\end{proof}

\begin{proof}[Proof of Proposition \ref{prop:optimization_no_constraints}]
Following the same steps as in the proof of Proposition \ref{prop:loss_function}, we first notice that the expected loss function marginalizes as:
\begin{equation}
\label{eq:obj_func_marginalization}
\begin{alignedat}{1}
\mathbb{E}&[l(Y,\tilde{Y})] = \sum_{s \in A}  \left[ \mathbb{P}(\tilde{Y}=1|Y=0,S=s) \, \mu_{s} \, (1-\mus) \, l(0,1)  \right. \\
 &\quad \left. +  \mathbb{P}(\tilde{Y}=0|Y=1,S=s) \, \mu_{s} \, \mus \, l(1,0) \right] \\
 &= \sum_{s \in A} \mu_{s} \, \left[\tilde{FPR}_{s}(1-\mus) \, l(0,1) + \tilde{FNR}_{s} \, \mus \,  l(1,0)\right],
\end{alignedat}
\end{equation}
so that it suffices to prove the result when solving
\begin{equation*}
\min_{\tau_{s}, \tilde{p}_{0,s}, \tpp} \left\{ \tilde{FPR}_{s} \, (1-\mus) \, l(0,1) + \tilde{FNR}_{s} \, \mus \, l(1,0) \right\},
\end{equation*}
for an arbitrary $s \in A$. For brevity we denote:
\begin{align*}
FPR^{\star}_{s} &= \mathbb{P}(\hat{Y} \geq \tau_{s} | Y=0, S=s), \enskip TNR^{\star}_{s} = 1-FPR^{\star}_{s}, \\
FNR^{\star}_{s} &= \mathbb{P}(\hat{Y} < \tau_{s} | Y=1, S=s), \enskip  TPR^{\star}_{s} = 1-FNR^{\star}_{s},
\end{align*}
so that
\begin{align*}
\tilde{FPR}_{s} &= TNR^{\star}_{s} \, \tpn + FPR^{\star}_{s} \, \tpp, \\
\tilde{FNR}_{s} &= FNR^{\star}_{s} \, (1-\tpn) + TPR^{\star}_{s} \, (1-\tpp),
\end{align*}
where, although not explicitly stated, $\tilde{FPR}_{s}$ and $\tilde{FNR}_{s}$ are functions of the variables $\tau_{s}, \tpp, \tpn$.
Therefore:
\begin{align*}
&\min_{\tau_{s}, \tilde{p}_{0,s}, \tpp} \left\{ \tilde{FPR}_{s} \, (1-\mus) \, l(0,1) + \tilde{FNR}_{s} \, \mus \, l(1,0)  \right\}  \\
&= \min_{\tau_{s}, \tilde{p}_{0,s}, \tpp} \left\{ [TNR^{\star}_{s} \, \tpn + FPR^{\star}_{s} \, \tpp](1-\mus) \, l(0,1)  \right. \\
&\qquad \qquad \left. + \, [FNR^{\star}_{s} \, (1-\tpn) + TPR^{\star}_{s} \, (1-\tpp)]\mus \, l(1,0)   \right\} \\
&= \min_{\tau_{s}, \tilde{p}_{0,s}, \tpp} \left\{ \tpp [FPR^{\star}_{s} \, (1-\mus) \, l(0,1) -TPR^{\star}_{s} \, \mus \, l(1,0)  ]  \right. \\
&\qquad \qquad \left. + \, \tpn [TNR^{\star}_{s} \, (1-\mus) \, l(0,1) - FNR^{\star}_{s} \, \mus \, l(1,0)]  \right. \\
&\qquad \qquad \left. + \, TPR^{\star}_{s} \, \mus \, l(1,0) + FNR^{\star}_{s} \, \mus \, l(1,0)  \right\}.
\end{align*}
Under the assumptions of Equation \eqref{eq:assumption_optimization_no_constraints}, it follows:
\begin{align*}
FPR^{\star}_{s} \, (1-\mus) \, l(0,1) -TPR^{\star}_{s} \, \mus \, l(1,0) &< 0, \\
TNR^{\star}_{s} \, (1-\mus) \, l(0,1) - FNR^{\star}_{s} \, \mus \, l(1,0) &> 0,
\end{align*}
so that to minimize the desired quantity, we must set $\tpp = 1$ and $\tpn = 0$ as desired.
\end{proof}

\section{Configuration of Experiments for Reproducibility}
\label{sec-reproducibility}

We now provide configuration details of our experiments.

\subsection{Synthetic Dataset (Section \ref{sec_synthetic_experiment})}

\subsubsection*{Dataset Generation}

We consider a set $A_1$, consisting of a binary sensitive attribute, and $A_2$, consisting of a different sensitive attribute with 3 possible values. Therefore, the space $A = A_1 \times A_2$ encompasses 6 intersections of sensitive attributes $s_1, \ldots, s_6$. We fix true base rates as follows:
\begin{equation}
\label{eq:base_rates_comparison}
\begin{alignedat}{3}
&\mu_{s_1} = 0.05, &&\mu_{s_2} =  0.55, &&\mu_{s_3} = \ldots = \mu_{s_6} = 0.1, \\
&\mu_{1|s_1} = 0.05, \quad &&\mu_{1|s_2} =  0.95, \quad &&\mu_{1|s_3} = \ldots = \mu_{1|s_6} = 0.5. \\
\end{alignedat}
\end{equation}
The true value of $\epsilon_{IR}$ can be exactly computed as $\log\left( \frac{0.95}{0.05} \right) \approx 2.94$.

\subsubsection*{Parameter Configuration}

The number of bootstrapped datasets is $B = 1,\!000$, each of size equal to the original one. The smoothing parameters are $\alpha = \beta = 0.01$ to avoid divisions by zero. When using Bayesian estimation, we generate $m=1,\!000$ Monte Carlo samples and consider a non-informative prior $\alpha=\beta=\frac{1}{3}$. 

To approximate the estimators' Mean Squared Error (MSE), we generate 1,000 different datasets of increasing size with the same true base rates as in Equation \eqref{eq:base_rates_comparison}. For each dataset, we estimate $\epsilon_{IR}$ using the techniques of Section \ref{sec:robust_eps_est}.

\subsection{Adult Income Prediction (Section \ref{sec:adult_example})}

\subsubsection*{Dataset Preparation}
The Adult Income Prediction dataset is publicly available \cite{Dua2019} and is already split into a training set, consisting of 32,561 observations, and a test set, with 16,281 data points. We removed from the training set individuals originally from the Netherlands, as they are not represented in the test set. We represent age as a binned binary categorical variable indicating which
individuals are over 50. Gender is
considered as a binary attribute in the Adult dataset. 
Race is encoded in the dataset into 5 different categories. For the purpose of this experiment, since the dataset contains few instances of categories \q{Eskimos and American Indians} and \q{Other}, we encode them together under the label \q{Other}. We also standardized all continuous variables and created dummy variables for the categorical ones. 

\subsubsection*{Model}
We built a classifier returning scores in $[0,1]$ via Extreme Gradient Boosting\footnote{Implemented in the \texttt{XGBoost} Python package version 0.81} and kept default parameters, except setting 20 boosting iterations and \texttt{learning\_rate = 0.01}. We built a model returning only binary predictions by applying a fixed threshold equal to 0.5.

\subsubsection*{Intersectional Fairness Estimation Parameters}

We choose smoothing parameters
$\alpha=\beta=0.01$ to avoid division by zero when using the empirical and bootstrap estimators. Prior parameters for
the Beta distribution are both set to $\frac{1}{3}$.

\subsubsection*{Post-Processing Parameters and Implementation}

We set a loss function that gives equal
weights to false positive and false negative predictions; i.e.,
$l(0,1)=l(1,0)=1$. We applied different optimization routines, depending on the post-processing method:
\begin{itemize}
\item For \q{Randomization-only} post-processing: Linear programming using the coin-or branch and cut solver \cite{Forrest2018},
\item For \q{Overall} post-processing: Constrained optimization using sequential quadratic programming \cite{Kraft1988},
\item For \q{Deterministic} and \q{Sequential} post-processing: Unconstrained optimization using two different approaches. The first uses the L-BFGS-B algorithm  \cite{Byrd1995}, which  approximates gradient information and therefore we make use of the smoothing technique proposed in the previous paragraph. The second uses a Bayesian optimizer that approximates the objective function with a Gaussian process \cite{skopt2018}, which can thus deal with non-differentiable functions as it does not rely on gradient information. 
\end{itemize}

\newpage
\onecolumn

\section{Extra Material for Experiments}

\subsection{Adult Income Prediction (Section \ref{sec:adult_example})}

\begin{center}
\begin{table}[H]
\caption{Predictive performance of given binary predictor and post-processed models on the Adult test set with gender, age, and race as sensitive attributes.}
\begin{tabular}{l|ll|llll}
                    & \multicolumn{2}{c|}{\textit{No fairness constraints}}                                                      & \multicolumn{4}{c}{\textit{With fairness constraint $\epsilon \leq 8.14  - \log(400) \approx 0.157$}}                                                                                                                                                                                                                          \\ \hline
                    & \begin{tabular}[c]{@{}l@{}}Given\\ binary predictor\end{tabular} & \begin{tabular}[c]{@{}l@{}}Optimal\\ score model\end{tabular} & \begin{tabular}[c]{@{}l@{}}Randomization\\ only\end{tabular}  & Deterministic & \begin{tabular}[c]{@{}l@{}}Sequential\end{tabular} & Overall \\
TPR                 & 0.5216            & 0.5258 & 0.5197                                                        & 0.5785                                                               & 0.5238 & 0.5139                                                  \\
FPR                 & 0.0433             & 0.0451 & 0.0439                                                                & 0.0762                                                             & 0.0452 & 0.0413                                                       \\
Expected loss function & 0.1416             & 0.1465 & 0.1470                                                                & 0.1578                                                           & 0.1470 & 0.1464
\end{tabular}
\end{table}

\bigskip
\begin{table}[H]
\caption{Probabilities of flipping the original predictions for the \q{randomization-only} post-processing model, which has been constructed on a binary classifier trained on the Adult training set when gender and age are considered as sensitive attributes. The probability for unreported combinations of sensitive attributes is equal to 0.}
\begin{tabular}{l|cc}
                         & \multicolumn{2}{c}{\textit{Model prediction}} \\
                         & Income $\leq$ 50k   & Income $>$ 50k   \\ \hline
\begin{tabular}[c]{@{}l@{}} Female, Age $\leq$ 50, \\ Asian-Pacific Islander  \end{tabular}     & 0.01           & 0                  \\
Female, Age $\leq$ 50, Black& 0.01          & 0                \\
\begin{tabular}[c]{@{}l@{}} Female, Age $>$ 50, \\ Asian-Pacific Islander  \end{tabular}      & 0.09               & 1                  \\
Female, Age $\leq$ 50, Black    & 0.01           & 0                  \\
Female, Age $>$ 50, Other & 0.07          & 0                \\
\begin{tabular}[c]{@{}l@{}} Male, Age $\leq$ 50, \\ Asian-Pacific Islander  \end{tabular}     &0               & 0.01                  \\
\begin{tabular}[c]{@{}l@{}} Female, Age $>$ 50, \\ Asian-Pacific Islander  \end{tabular}       & 0              & 0.35    
\end{tabular}
\end{table}

\bigskip
\begin{figure}[H]
\includegraphics[width=0.5\columnwidth]{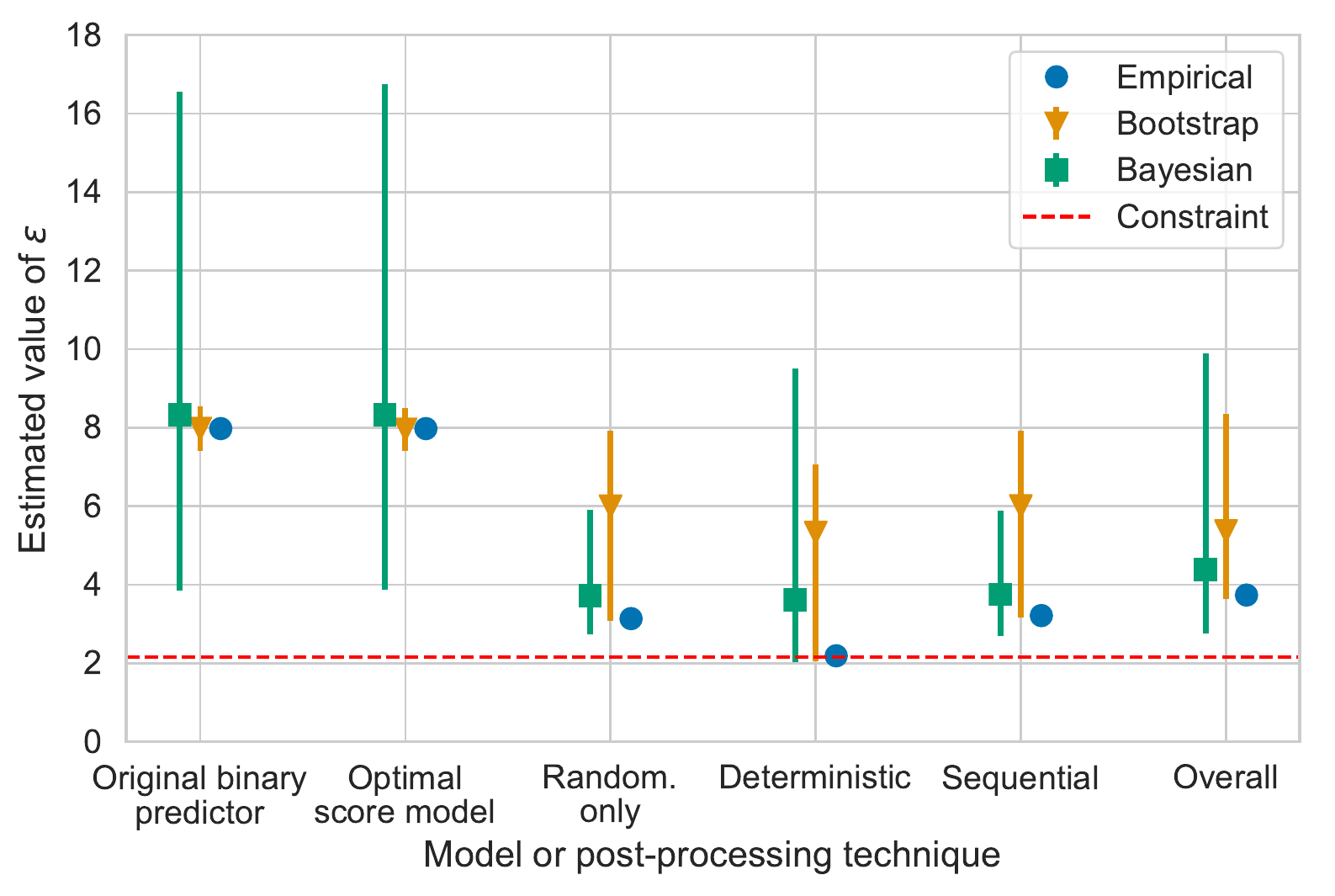}
\caption{Estimate of \ed\ fairness for equalized odds across the original and the post-processed models. Results are based on the Adult test set when gender, age, and race are considered as sensitive attributes. The constraint is set at $ \epsilon \leq 8.14 - \log(400) \approx 2.15$.}
\end{figure}
\end{center}

\end{document}